%% file: neurips_2026_draft.tex
\documentclass{article}
\usepackage{wrapfig}
\PassOptionsToPackage{numbers, compress}{natbib}
\usepackage{amsmath}
\usepackage{amsthm}
\usepackage{bm}
\usepackage{makecell}
\usepackage[preprint]{neurips_2026}

\usepackage[utf8]{inputenc} 
\usepackage[T1]{fontenc}    
\usepackage{hyperref}       
\usepackage{url}            
\usepackage{booktabs}       
\usepackage{amsfonts}       
\usepackage{nicefrac}       
\usepackage{microtype}      
\usepackage{xcolor}         
\usepackage{listings}
\usepackage{tcolorbox}
\usepackage{cleveref}
\tcbuselibrary{skins, breakable}

\usepackage{float}
\usepackage{placeins}
\usepackage{tikz}
\usepackage{multirow}
\usepackage{longtable}
\usetikzlibrary{shapes, arrows.meta, positioning, shapes.symbols, fit, calc, decorations.pathreplacing}

\usepackage{algorithm}
\usepackage{algorithmic}

\definecolor{agent1color}{RGB}{230, 242, 255} 
\definecolor{agent2color}{RGB}{240, 255, 240} 
\definecolor{agent3color}{RGB}{255, 240, 240} 
\definecolor{codebg}{RGB}{245, 245, 245}

\lstdefinelanguage{json}{
    basicstyle=\ttfamily\small,
    showstringspaces=false,
    breaklines=true,
    frame=single,
    backgroundcolor=\color{codebg},
    literate=
     *{:}{{{\color{red!70!black}{:}}}}{1}
      {,}{{{\color{red!70!black}{,}}}}{1}
      {\{}{{{\color{blue}{\{}}}}{1}
      {\}}{{{\color{blue}{\}}}}}{1}
      {[}{{{\color{blue}{[}}}}{1}
      {]}{{{\color{blue}{]}}}}{1},
}

\lstdefinelanguage{moose}{
    basicstyle=\ttfamily\small,
    backgroundcolor=\color{codebg},
    frame=single,
    breaklines=true,
    keywordstyle=\color{blue}\bfseries,
    morekeywords={Mesh, Variables, AuxVariables, Kernels, BCs, Executioner, Outputs, Materials},
    commentstyle=\color{green!50!black},
    stringstyle=\color{red!70!black},
    morecomment=[l]{\#},
    alsoletter={[]}
}

\newtcolorbox{promptbox}[2][]{
  colback=white,
  colframe=black!70,
  coltitle=white,
  title=\textbf{#2},
  fonttitle=\bfseries,
  boxrule=0.5mm,
  arc=2mm,
  enhanced,
  attach boxed title to top left={xshift=5mm,yshift=-3mm},
  boxed title style={colback=black!70},
  #1
}

\definecolor{pdecolor}{RGB}{0, 100, 180}
\definecolor{codegencolor}{RGB}{0, 130, 60}
\definecolor{refinecolor}{RGB}{180, 60, 0}
\definecolor{evalcolor}{RGB}{120, 0, 150}

\theoremstyle{definition}
\newtheorem{definition}{Definition}[section]
\theoremstyle{plain}
\newtheorem{proposition}{Proposition}[section]
\newtheorem{theorem}{Theorem}[section]
\newtheorem{corollary}{Corollary}[section]
\theoremstyle{remark}
\newtheorem{remark}{Remark}[section]

\crefname{definition}{definition}{definitions}
\Crefname{definition}{Definition}{Definitions}
\crefname{proposition}{proposition}{propositions}
\Crefname{proposition}{Proposition}{Propositions}
\crefname{theorem}{theorem}{theorems}
\Crefname{theorem}{Theorem}{Theorems}
\crefname{corollary}{corollary}{corollaries}
\Crefname{corollary}{Corollary}{Corollaries}
\crefname{remark}{remark}{remarks}
\Crefname{remark}{Remark}{Remarks}

\title{Your Simulation Runs but Solves the Wrong Physics:\\
PDE-Grounded Intent Verification for LLM-Generated Multiphysics Simulation Code}

\author{
  Zhenghan Song$^{*}$ \\
  Cornell University \\
  \And
  Yulong Liu$^{*\dagger}$ \\
  Cornell University \\
  \AND
  Cheng Wan \\
  Cornell University \\
  \And
  Chenjun Li \\
  Cornell University \\
  \AND
  Lingfu Liu \\
  Cornell University \\
  \And
  Yunyi Li \\
  Columbia University \\
  \And
  Congcong Yuan \\
  Harvard University \& Nanyang Technological University
}

\begin{document}

\maketitle

\begingroup
\renewcommand{\thefootnote}{}
\footnotetext{
\noindent\footnotesize
\begin{tabular}{@{}lp{0.86\textwidth}@{}}
$^{*}$ & Joint first authors. Yulong Liu and Zhenghan Song contributed equally. 
The order of the joint first authors was determined by a coin flip.\\
$^{\dagger}$ & Correspondence to: Yulong Liu, \texttt{yl3825@cornell.edu}.
\end{tabular}
}
\endgroup
\begin{abstract}
Execution-based evaluation of LLM-generated code implicitly treats successful execution as a proxy for correctness. In scientific simulation, this proxy is insufficient: a generated input file can run, mesh, and converge while encoding governing equations that differ from the user's intent. We call this mismatch between intended physics and generated code the \emph{comprehension--generation gap}. We instantiate this in MOOSE, where Kernel and BC objects map compositionally to weak-form residual terms, enabling deterministic reconstruction of the encoded PDE and comparison against an intended contract. We formalize this comparison as the \textbf{Intent Fidelity Score (IFS)}, a structural metric covering governing terms, BCs, ICs, coefficients, and time scheme. Building on IFS, we develop a \textbf{PDE-grounded refinement loop} that uses deterministic violation reports to correct generated code iteratively. We evaluate on \textbf{MooseBench}, a 220-case multiphysics benchmark with PDE-level ground truth released with this work. On this benchmark, our method consistently improves mean IFS
  over direct generation, with gains concentrated on hard cases. On the subset where direct generation falls below IFS 0.7, refinement adds +0.22 to +0.41 absolute IFS. In the deployment audit, execution-only repair improves execution success while leaving 39--40\% of all 220 cases runnable but still solving the wrong physics across the three main deployment-audit models, exposing executability and intent fidelity as separable failure modes.
Static proof-of-concept experiments on four PDE-oriented DSLs (UFL/FEniCS, FreeFEM, FiPy, and Devito) suggest that the reconstruction-and-comparison pattern extends beyond MOOSE.
These findings reinforce that executable simulation code should be verified against the mathematical structure it is intended to encode, not accepted on execution alone.
\end{abstract}
\newpage
\section{Introduction}
\label{sec:introduction}
The dominant evaluation paradigm for LLM-generated code relies on execution-based testing, where successful execution is often taken as evidence of functional correctness. In scientific computing, however, the relevant specification is a mathematical model of the underlying physics rather than merely input--output behavior. Execution checks can verify syntactic validity and runtime stability, but they cannot determine whether the generated code faithfully encodes the intended governing equations and physical assumptions. We define this structural alignment between generated simulation code and the intended physical specification as \emph{intent fidelity}, which serves as the central criterion for evaluating scientific simulation code in this work.

Multiphysics simulation is central to engineering design and scientific discovery, especially when experiments are prohibitive~\citep{BELACHEW2026107501}; in safety-critical settings such as nuclear reactor operation, silently solving the wrong PDE can affect estimates of maximum sustainable core power~\citep{lee2017multi}. Multiphysics Object-Oriented Simulation Environment (MOOSE)~\citep{permann2020moose} is a widely used open-source finite-element platform where simulations are specified by structured input files declaring meshes, variables, kernels, boundary conditions, materials, and solvers. Since kernels encode weak-form residual contributions, constructing intent-faithful MOOSE inputs requires mapping a physical scenario to the correct combination of MOOSE objects. Figure~\ref{fig:simulation_error} illustrates five such silent failures that converge without warning, indistinguishable from correct simulations under execution-based metrics (details in Appendix~\ref{app:simulation_error}).

\begin{figure}
    \centering
    \includegraphics[width=1\linewidth]{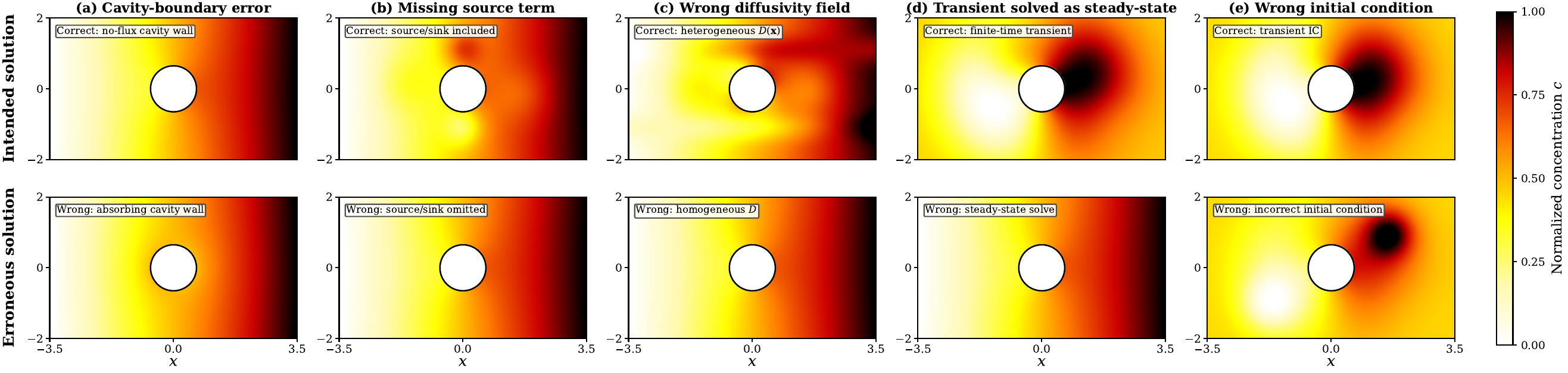}
    \caption{
    Silent-failure gallery for cavity diffusion simulations. 
    Panels show runnable simulations that produce plausible fields while violating the intended specification:
    (a) cavity boundary condition,
    (b) source/sink term,
    (c) diffusivity field,
    (d) transient formulation,
    and (e) initial condition.
    }
    \label{fig:simulation_error}
\end{figure}

Recent LLM-based systems generate scientific simulation code from natural language, including MOOSEnger for MOOSE, MetaOpenFOAM for CFD, and AutoFLUKA for radiation transport~\citep{li2026moosenger,chen2024metaopenfoam,ndum2024autofluka}. These systems reduce the expertise required to initiate simulations, but their evaluation largely centers on \emph{executability}, conflating syntactic validity with intent fidelity. DPIaC-Eval shows a similar deployment--intent gap (95.5\% deployment success vs.\ 25.2\% intent satisfaction)~\citep{zhang2025deployability}; the same failure mode appears in MOOSE (Appendix~\ref{app:examples}).

We identify this as a \emph{comprehension--generation gap}: LLMs may explain the governing physics correctly, yet fail to encode all required terms in executable simulation code. MOOSE makes this gap structurally verifiable: each Kernel or BC object contributes a specific weak-form or boundary term, so the active object set determines a compact PDE-level representation of the generated input. This enables \emph{intent alignment}: reconstructing the PDE encoded by generated code and comparing it with the user-intended PDE. This raises a concrete question: can the physics encoded in LLM-generated simulation code be verified automatically against the user's intended PDE, without relying on execution or the model's own judgment?

Our central contribution is a \emph{physics-level diagnostic layer} for LLM-generated MOOSE simulations: a framework that exploits MOOSE's explicit mathematical semantics rather than attempting general-purpose program verification. Around this layer, we make four contributions:
\begin{enumerate}
    \item \textbf{Silent physics failures and MooseBench.} We show that LLM-generated simulations can execute while encoding incorrect physics, and introduce MooseBench, a 220-case audited benchmark spanning seven physics families with PDE-level ground truth.
    \item \textbf{Deterministic PDE reconstruction and IFS.} We formalize the kernel$\leftrightarrow$weak-form correspondence, reconstruct the PDE encoded by MOOSE inputs, and define IFS as a weighted structural-fidelity metric with actionable violation reports.
    \item \textbf{Contract-guided generation and refinement.} We use structured physics contracts as interfaces between intent understanding and code generation, with deterministic violation reports enabling targeted refinement beyond execution-only feedback.
    \item \textbf{Portability and boundary analysis.} We extend the reconstruction and comparison pattern to UFL/FEniCS, FreeFEM, FiPy, and Devito as proof-of-concept demonstrations. We also report MCS as a limitation-aware diagnostic for the coefficient/material blind spots that structural IFS does not cover.
\end{enumerate}

\section{Related Work}
\label{sec:related}

\paragraph{LLM systems for simulation code.}
A growing line of work applies LLMs to scientific simulation code generation. MOOSEnger~\citep{li2026moosenger} generates MOOSE inputs through a multi-agent pipeline, and MechAgents~\citep{ni2024mechagents} uses a similar architecture for mechanics. MetaOpenFOAM~\citep{chen2024metaopenfoam} targets CFD, and AutoFLUKA~\citep{ndum2024autofluka} targets Monte Carlo radiation transport. All four rely on execution-based validation, equating a successful run with functional correctness. We show this signal is insufficient: a MOOSE input can execute and converge while encoding a PDE different from the one specified.

\paragraph{Verifying generated code against user intent.}
Intent verification for generated code has been studied through specification-based translation~\citep{schellhorn2024vericode}, step-wise verifiers that ask whether each generated step matches user intent~\citep{endres2024}, and policy-guided verifier feedback for Infrastructure-as-Code~\citep{terraformer}. These methods require the verifier itself to reason about correctness, whether through formal proof obligations, LLM critique, or learned policies. We instead exploit MOOSE's compositional structure, where each Kernel and BC object contributes a fixed weak-form term, reducing verification to a deterministic structural comparison between two PDE objects. Our diagnostic also sits upstream of traditional simulation V\&V~\citep{oberkampf2010verification}, which assumes the input file already encodes the correct equations.

\paragraph{Self-refinement and PDE representations.}
LLM self-refinement methods iteratively improve generated code via the model's own critique~\citep{madaan2023self} or runtime traces~\citep{self_debugging}. These signals are themselves produced by the same model that generated the code, leaving them exposed to the comprehension--generation gap we identify. Our refinement instead feeds the LLM a deterministic, structurally computed PDE discrepancy, so missing or extra terms are surfaced explicitly regardless of whether the model would detect them. Prior machine-readable PDE representations~\citep{Lample2020Deep,DBLP:journals/corr/abs-2010-08895,alnaes2012ufl} encode equations as inputs to models that solve equations, generate code, or constrain physics-aware learning~\citep{liu2026pinncavities,liu2025physics}. Our representation supports verification, which only requires deterministic comparison between two discrete structures and admits a coarser classification-level descriptor.

\section{MOOSE Semantics and Silent Physics Failures}
\label{sec:background}

\subsection{MOOSE Framework: Kernels as Weak-Form Residual Terms}
\label{sec:background:moose}

The MOOSE framework~\citep{permann2020moose} solves systems of partial differential equations with the finite element method. Its core abstraction is the \emph{Kernel}, a modular object representing a domain contribution to the weak form; boundary integrals are handled separately by boundary-condition objects. For example, the weak form of the zero-gravity Darcy pressure equation can be written as
\begin{equation}
\underbrace{\left(\nabla \psi,\frac{K}{\mu}\nabla p\right)}_{\texttt{Kernel}}
-
\underbrace{\left\langle \psi,\frac{K}{\mu}\nabla p \cdot \hat{n}\right\rangle}_{\texttt{Boundary Condition}}
= 0.
\label{eq:darcy_weak}
\end{equation}
The detailed derivation of \eqref{eq:darcy_weak} and its corresponding MOOSE input file structure are given in Appendix~\ref{app:darcy}. For the representative Kernels covered in \Cref{tab:full_kernel}, enumerating the active Kernel and boundary-condition objects in a MOOSE input file reconstructs its encoded weak-form residual for intent-fidelity comparison. \Cref{fig:kernel_pde} illustrates this mapping on a coupled thermomechanical example.

\begin{wrapfigure}[12]{r}{0.46\textwidth}
    \vspace{-0.8em}
    \centering
    \includegraphics[width=0.44\textwidth]{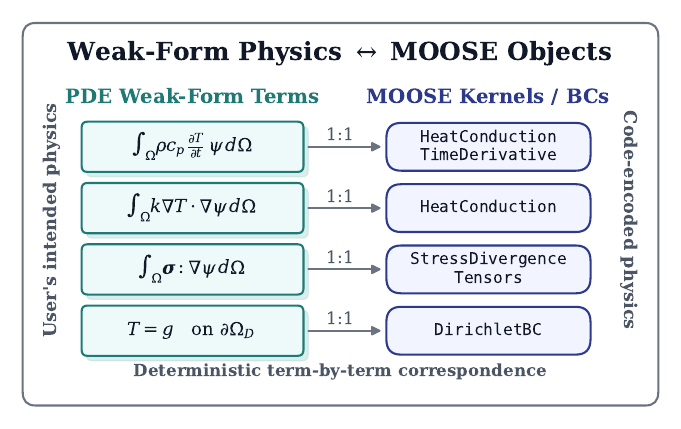}
    \vspace{-0.4em}
    \caption{Weak-form terms map to MOOSE Kernel/BC objects.}
    \label{fig:kernel_pde}
    \vspace{0.2em}
\end{wrapfigure}

\paragraph{MOOSE objects as semantic macros.}
We use \emph{semantic macro} informally to describe a MOOSE object as a named, reusable unit of PDE semantics. Instantiating an object correctly means satisfying its schema, such as required parameters and valid types; however, this only checks that the object can exist in the input file. Intent fidelity asks a different question: whether the chosen objects collectively encode the weak-form structure intended by the user.

To make this structure explicit, term comparison is keyed by the variable and a normalized PDE operator type drawn from the kernel--PDE mapping table. As a structural witness for each operator type, the table additionally records a descriptor
\[
(\tau,\phi,\gamma)\in
\mathcal{O}_{\mathrm{trial}}\times
\mathcal{O}_{\mathrm{test}}\times
\mathcal{G},
\quad
|\mathcal{O}_{\mathrm{trial}}|=8,\quad
|\mathcal{O}_{\mathrm{test}}|=4,\quad
|\mathcal{G}|=3,
\]
capturing the operator on the trial-side quantity, the operator on the test function, and their contraction. This two-level representation makes operator-type assignments auditable against the underlying weak-form contributions rather than taken on faith. Thus, for the covered fragment, reconstruction reduces MOOSE objects to bound normalized PDE terms plus BC, IC, coefficient/material, and time-scheme facts; the full formalization appears in Appendix~\ref{app:formalization}.


\section{PDE-Grounded Intent Verification and Generation}
\label{sec:method}

\subsection{Problem Formulation}
\label{sec:method:formulation}

Given a natural-language description $d$ of a physical simulation, the task is to generate MOOSE input code $c$ that faithfully encodes the intended physics. We measure fidelity by comparing the physics contract deterministically reconstructed from $c$ against a reference physics contract. Let $\mathcal{P}_{\mathrm{code}}(c)$ denote the contract reconstructed from code $c$, and let $\mathcal{P}_{\mathrm{ref}}$ denote the comparison reference: $\mathcal{P}_{\mathrm{gt}}$ during benchmark evaluation and $\mathcal{P}_{\mathrm{llm}}$ during deployment-time refinement. We use $\mathcal{P}_{\mathrm{cand}}$ for a generic candidate contract. The resulting score is
\begin{equation}
\label{eq:eval_fidelity}
\mathrm{IFS}\!\left(\mathcal{P}_{\mathrm{ref}},\; \mathcal{P}_{\mathrm{code}}(c)\right)
= 1-\Delta_{\mathrm{phys}}\!\left(\mathcal{P}_{\mathrm{ref}},\; \mathcal{P}_{\mathrm{code}}(c)\right),
\end{equation}
where $\Delta_{\mathrm{phys}}$ is the weighted checkpoint discrepancy defined in \Cref{sec:method:ifs}, so higher IFS indicates fewer mismatches in the reconstructed physics.

\subsection{PDE Formal Representation}
\label{sec:method:pde}
We use a compact representation for the physics encoded by a MOOSE input:
\begin{equation}
\label{eq:physics_repr}
\mathcal{P} = \left(\mathcal{T},\; \mathcal{B},\; \mathcal{I},\; \mathcal{C},\; \mathcal{S},\; \Omega\right),
\end{equation}
where $\mathcal{T}$ contains governing-equation terms, $\mathcal{B}$ boundary conditions (BCs), $\mathcal{I}$ initial conditions (ICs), $\mathcal{C}$ coefficient/material facts, $\mathcal{S}$ the time scheme, and $\Omega$ the domain specification. We call this tuple a physics contract when it is used as the reference for synthesis or evaluation.
We use the subscript convention introduced in \Cref{sec:method:formulation} for ground-truth, LLM-extracted, code-reconstructed, reference, and candidate contracts.

Reconstruction is deterministic for the covered MOOSE fragment: each active Kernel is mapped by $\mathcal{M}$ to a normalized PDE operator type, an optional descriptor witness, a coefficient extraction rule, and a severity weight; BCs and ICs are normalized analogously. The term set $\mathcal{T}$ is keyed by variable and normalized operator type. The mapping is sufficient for the MooseBench coverage; uncovered kernel--term instances are tracked in $\mathcal{U}$ and excluded from IFS computation. Exact coverage statistics appear in Appendix~\ref{app:formalization} and Table~\ref{tab:full_kernel}. The full mapping and reconstruction algorithm appear in Appendices~\ref{app:kernel_table} and~\ref{app:architecture} (Algorithm~\ref{alg:reconstruct}).

\subsection{Intent Fidelity Score (IFS)}
\label{sec:method:ifs}

IFS compares a reference physics contract $\mathcal{P}_{\mathrm{ref}}$ with a candidate $\mathcal{P}_{\mathrm{cand}}$ through checkpoints $\mathcal{Q}(\mathcal{P}_{\mathrm{ref}})$ induced by the reference. We first define the weighted physics discrepancy:
\begin{equation}
\label{eq:physics_discrepancy}
\Delta_{\mathrm{phys}}(\mathcal{P}_{\mathrm{ref}}, \mathcal{P}_{\mathrm{cand}})
= \frac{
\sum_{j \in \mathcal{Q}(\mathcal{P}_{\mathrm{ref}})}
w_j \cdot \mathbf{1}[\text{fail}_j(\mathcal{P}_{\mathrm{ref}}, \mathcal{P}_{\mathrm{cand}})]
}{
\sum_{j \in \mathcal{Q}(\mathcal{P}_{\mathrm{ref}})} w_j
}.
\end{equation}
The reported fidelity score is its complement:
\begin{equation}
\label{eq:ifs}
\mathrm{IFS}(\mathcal{P}_{\mathrm{ref}}, \mathcal{P}_{\mathrm{cand}})
=1-\Delta_{\mathrm{phys}}(\mathcal{P}_{\mathrm{ref}}, \mathcal{P}_{\mathrm{cand}}).
\end{equation}
where $w_j>0$ is the importance weight of checkpoint $j$, and $\mathbf{1}[\text{fail}_j]$ indicates a checkpoint mismatch. Checkpoints cover terms, BCs, ICs, direct coefficient attributes represented in the contract, and time scheme; variables are aligned by signatures over normalized PDE operator types, so naming differences do not affect the score. Severity weights emphasize equation-structure errors and assign lower weights to parameter-level mismatches. Details on variable alignment, operator-type matching, descriptor auditing, and severity weights are provided in Appendix~\ref{app:formalization}.

IFS is a structural fidelity score, not a physical validity certificate. It does not assess mesh adequacy, solver convergence, discretization error, or coefficient magnitudes hidden in complex material chains. We use the Material Consistency Score (MCS) as a conditional secondary diagnostic for explicit coefficient/material facts when comparable facts are available; Appendix~\ref{app:mcs} defines and scopes this diagnostic. Within the covered MOOSE semantic fragment, IFS is exact over the induced checkpoints. The representation-relative guarantee and proof appear in Appendix~\ref{app:formalization} (\Cref{thm:reconstruction_soundness,thm:ifs_soundness,thm:representational_completeness}).

\subsection{Error-Guided Refinement}
\label{sec:method:refinement}
Before refinement, the system factors synthesis as $d\to\mathcal{P}_{\mathrm{llm}}\to c$: the extracted contract lists governing terms, BCs, ICs, coefficients/material facts, and time scheme, and code generation uses it as a checklist for MOOSE synthesis. After code generation, the system reconstructs $\mathcal{P}_{\mathrm{code}}$ and compares it with $\mathcal{P}_{\mathrm{llm}}$. If $\mathrm{IFS}<\tau_{\mathrm{IFS}}$, a structured violation report in the candidate's own variable names is fed back to the LLM for targeted correction; the loop repeats until convergence or $N_{\max}$ iterations (\Cref{fig:system}). The feedback is computed by deterministic physics-contract comparison.

\begin{figure}[t]
\centering
\includegraphics[width=0.85\linewidth]{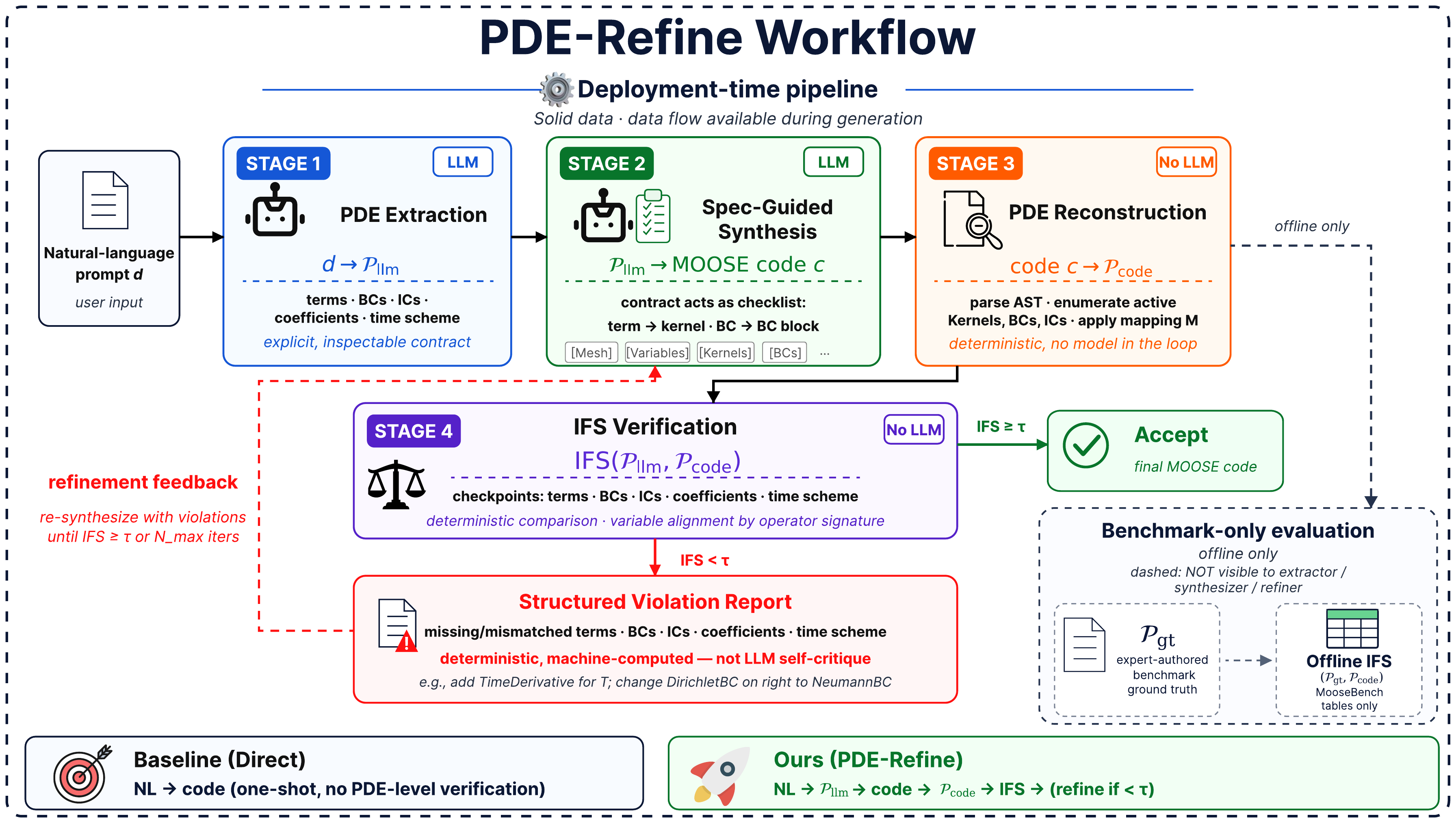}
\caption{System architecture. The deployment-time loop extracts $\mathcal{P}_{\mathrm{llm}}$, synthesizes MOOSE code, reconstructs $\mathcal{P}_{\mathrm{code}}$, and feeds deterministic IFS violations back for refinement. Offline benchmark evaluation uses $\mathcal{P}_{\mathrm{gt}}$ only after generation; it is never available to the extractor, generator, or refiner.}
\label{fig:system}
\end{figure}

\FloatBarrier
\section{MooseBench: A PDE-Grounded Multiphysics Benchmark}
\label{sec:benchmark}

We construct \textbf{MooseBench}, a 220-case benchmark for AI-assisted multiphysics code generation with deterministic PDE-level ground truth. The benchmark spans seven physics families and three complexity tiers; the full family distribution and tier definitions are reported in Appendix~\ref{app:moosebench} (Table~\ref{tab:stratified}). Prompts vary in verbosity and engineering context, but each is well-posed enough to determine the target PDE, boundary/initial conditions, coefficients, geometry, and time scheme.

\subsection{Ground Truth Construction and Audit}
\label{sec:gt-audit}
For each prompt, an expert-written reference MOOSE input encodes the intended governing equations, BCs, ICs, material properties, and time scheme. Applying the deterministic reconstruction in \Cref{sec:method:pde} gives $\mathcal{P}_{\mathrm{gt}}$. To avoid circularity, the audit reviews the prompt, reference input, and reconstructed $\mathcal{P}_{\mathrm{gt}}$ jointly, checking prompt faithfulness, operator coverage, BC/IC consistency, material and time-scheme consistency, and prompt leakage. Kernel variants are canonicalized to normalized PDE operator types by the mapping table, and all benchmark artifacts are released in the supplementary material.

\FloatBarrier
\section{Experiments}
\label{sec:experiments}

\subsection{Experimental Setup}
\label{sec:exp:setup}

\paragraph{LLMs.}
We evaluate four LLMs spanning capability tiers and providers: Claude Sonnet 4.6
(Anthropic)~\citep{anthropic2026sonnet46}, GPT-5.4 and GPT-4.1-mini
(OpenAI)~\citep{openai2026gpt54,openai2025gpt41}, and DeepSeek V4 Flash
(DeepSeek)~\citep{deepseekai2026deepseekv4}, all at temperature $=0$.
Sonnet 4.6, GPT-5.4, and DeepSeek V4 Flash form the main sweep; GPT-4.1-mini
serves as a weak-model case study. Two additional models appear in appendix
experiments only. Claude Haiku 4.5~\citep{anthropic2025haiku45} appears in the
weak-model registry stress tests, and Gemini 3.1 Flash
Lite~\citep{google2026gemini31flashlite} appears in both the mixed-model
ablation and weak-model stress tests (Appendix~\ref{app:extra_figures}).

\paragraph{Method variants.}
We organize variants by the feedback source available to the generator. \textbf{Direct} maps the prompt directly to MOOSE code. \textbf{SpecGen} first extracts an intermediate $\mathcal{P}_{\mathrm{llm}}$ contract and uses it as a synthesis checklist, but receives no deterministic comparison feedback. \textbf{PDE-Refine} adds deterministic reconstruction: the generated code is mapped to $\mathcal{P}_{\mathrm{code}}$, compared against $\mathcal{P}_{\mathrm{llm}}$, and corrected using the resulting IFS violation report.

The execution audit holds the object-realization registry (Appendix~\ref{app:registry-note}) frozen and varies only the feedback signal: \textbf{Exec-Repair+Reg} receives runtime-log feedback, while \textbf{PDE-Reg} receives PDE/IFS feedback. Official-document retrieval is reported separately as a secondary access control.

\paragraph{Key hyperparameters.}
We use $N_{\max}=2$ refinement iterations, IFS convergence threshold
$\tau_{\mathrm{IFS}}=0.85$, and coefficient relative tolerance
$\delta_{\mathrm{coef}}=0.1$. Each PDE refinement step uses a regression guard
that accepts refined code only when IFS improves. The full reproducibility
manifest, including registry-repair settings, the InitExec2 definition,
and convergence diagnostics, is in Appendix~\ref{app:repro}
(Table~\ref{tab:repro-full}) and Appendix~\ref{app:extra_figures}
(Figure~\ref{fig:refinement_convergence}).

\subsection{Main Results}
\label{sec:exp:main}

\Cref{tab:main} reports the 220-case standard non-registry sweeps (parse failures scored as IFS = 0). All three main models are compared under identical Direct/SpecGen/PDE-Refine configurations. Under structural-uniform checkpoint weights, PDE-Refine gains remain positive for all three sweeps; full deltas are reported in Appendix~\ref{app:extra_figures} (Table~\ref{tab:severity_ablation}).

\begin{table}[t]
\centering
\caption{220-case standard non-registry MooseBench results. IFS is the unconditional mean over all cases with parse failures counted as 0; entries report mean $\pm$ 95\% bootstrap half-width over case IDs. \textbf{Bold} indicates the best deployable method within each sweep.}
\label{tab:main}
\small
\begin{tabular}{@{}lccc@{}}
\toprule
\textbf{LLM} & \textbf{Direct} & \textbf{SpecGen} & \textbf{PDE-Refine} \\
\midrule
Claude Sonnet 4.6 & $0.744{\pm}0.032$ & $0.800{\pm}0.023$ & $\mathbf{0.816{\pm}0.021}$ \\
GPT-5.4 & $0.756{\pm}0.033$ & $0.769{\pm}0.028$ & $\mathbf{0.802{\pm}0.025}$ \\
DeepSeek V4 Flash & $0.599{\pm}0.048$ & $0.668{\pm}0.045$ & $\mathbf{0.782{\pm}0.031}$ \\
\bottomrule
\end{tabular}
\end{table}

\Cref{tab:exec_audit} reports the controlled deployment audit. \textbf{Exec} uses InitExec2: an output is accepted if MOOSE exits with code 0, a solve starts before the first error, or no error appears within 2 seconds---an object-realization signal, not a convergence certificate. \textbf{GoodExec} is the fraction of all 220 cases passing Exec with $\mathrm{IFS}\geq0.85$; \textbf{FalseExec} is the fraction passing Exec with $\mathrm{IFS}<0.85$.

\begin{table}[t]
\centering
\caption{Deployment audit: PDE feedback vs.\ execution-only feedback under the same frozen object-realization registry. IFS is unconditional mean IFS; GoodExec and FalseExec are fractions of all cases, not conditional fractions among executed cases.}
\label{tab:exec_audit}
\small
\begin{tabular}{@{}llcccc@{}}
\toprule
\textbf{LLM} & \textbf{Method} & \textbf{IFS} & \textbf{Exec} & \textbf{GoodExec} & \textbf{FalseExec} \\
\midrule
Claude Sonnet 4.6 & Exec-Repair+Reg & 0.753 & 83.2\% & 43.2\% & 40.0\% \\
Claude Sonnet 4.6 & PDE-Reg & \textbf{0.854} & \textbf{91.4\%} & \textbf{60.0\%} & 31.4\% \\
\midrule
GPT-5.4 & Exec-Repair+Reg & 0.757 & 82.3\% & 42.3\% & 40.0\% \\
GPT-5.4 & PDE-Reg & \textbf{0.839} & \textbf{86.8\%} & \textbf{56.8\%} & 30.0\% \\
\midrule
DeepSeek V4 Flash & Exec-Repair+Reg & 0.626 & 74.1\% & 35.0\% & 39.1\% \\
DeepSeek V4 Flash & PDE-Reg & \textbf{0.824} & \textbf{87.7\%} & \textbf{60.0\%} & 27.7\% \\
\bottomrule
\end{tabular}
\end{table}

\noindent\textbf{Finding 1: PDE-grounded physics contracts raise fidelity most where direct generation is weak.}
\Cref{tab:main,tab:stats} show that PDE-Refine improves the standard semantic sweeps, with the largest gains on weaker models and $\mathrm{IFS}_{\mathrm{Direct}}<0.7$ hard cases. \\
\textbf{Finding 2: executability and intent fidelity are separable.} \Cref{tab:exec_audit} shows that, under the same frozen registry, PDE-Reg reduces FalseExec and increases GoodExec relative to execution-only repair: object-realization infrastructure can make simulations run, while PDE-grounded feedback is needed to reduce runnable-but-wrong outputs. PDE-Reg also raises Exec itself relative to Exec-Repair+Reg in weaker-model stress tests (GPT-4.1-mini 25.5\% $\to$ 63.2\%, Gemini 3.1 Flash Lite 51.8\% $\to$ 85.0\%; Table~\ref{tab:weak_model_registry}). This is consistent with PDE-grounded feedback producing more self-consistent variable, kernel, BC, and material references, leaving the frozen registry fewer mechanical gaps to resolve (Appendix~\ref{app:ae_failure}).

\paragraph{Official-document access control.}
We use official-document retrieval as a secondary control, not a main method family. It improves Direct IFS (0.599 to 0.672), but documentation plus execution-only repair mainly raises Exec while leaving a large runnable-but-wrong region (38.2\% FalseExec). With the same documents and frozen registry, Docs+PDE-Reg reaches the best IFS and GoodExec in this control (\Cref{tab:rag_baseline}), supporting the same conclusion: documentation helps object realization, while PDE-level feedback drives the fidelity gain. Leakage controls are reported in Appendix~\ref{app:rag_baseline}.

\begin{table}[htbp]
\centering
\caption{Official-document access control on the full 220-case DeepSeek V4 Flash setting. Exec uses InitExec2; GoodExec and FalseExec are fractions of all cases.}
\label{tab:rag_baseline}
\small
\begin{tabular}{@{}lcccc@{}}
\toprule
\textbf{Method} & \textbf{IFS} & \textbf{Exec} & \textbf{GoodExec} & \textbf{FalseExec} \\
\midrule
DocsDirect & 0.672 & 41.8\% & 25.5\% & 16.4\% \\
Docs+ExecRepair & 0.660 & 74.1\% & 35.9\% & 38.2\% \\
Docs+PDE-Reg & \textbf{0.816} & \textbf{87.3\%} & \textbf{58.6\%} & 28.6\% \\
\bottomrule
\end{tabular}
\end{table}

\paragraph{Weak-model stress test.}
We report GPT-4.1-mini, Claude Haiku 4.5, and Gemini 3.1 Flash Lite as appendix stress tests rather than main deployment-audit models. They follow the same qualitative pattern, with PDE-Reg improving fidelity and GoodExec over execution-only repair despite lower absolute model capability (Appendix~\ref{app:extra_figures}, Table~\ref{tab:weak_model_registry}).

\subsection{Error Analysis}
\label{sec:exp:errors}
\label{sec:exp:breakdown}

Direct generation primarily fails through BC mismatches and missing/extra terms, especially in weaker models; coefficient, IC, and time errors are secondary. BC failures rarely occur alone; most co-occur with term, time, or coefficient errors. Coefficient/material-only cases retain high IFS and motivate MCS as a secondary diagnostic. Per-model taxonomy, BC co-attribution, and family/sub-dimensional breakdowns appear in Appendix~\ref{app:extra_figures} (Tables~\ref{tab:error_taxonomy}, \ref{tab:bc_attribution}, \ref{tab:family_methods}, \ref{tab:subdim}).

\paragraph{Error detection coverage.}
\Cref{tab:error_categories} quantifies the IFS detection boundary for Direct outputs. Of 766 imperfect parsed Direct outputs, 87.5\% contain a structural component (term, BC, IC, or time scheme error) that IFS detects through the reconstructed physics contract. The remaining 12.5\% are coefficient/material-only cases: their mean IFS remains high because the mismatch is not a structural PDE discrepancy, motivating MCS as a conditional diagnostic for comparable coefficient/material facts (Appendix~\ref{app:mcs}).

\begin{table}[t]
\centering
\caption{Error category breakdown under Direct generation (three main sweeps plus the GPT-4.1-mini stress sweep; $n{=}766$ imperfect outputs). IFS detects structural errors; coefficient/material-only errors motivate MCS as a complementary diagnostic.}
\label{tab:error_categories}
\small
\begin{tabular}{@{}lrcl@{}}
\toprule
\textbf{Error Category} & \textbf{\%} & \textbf{Mean IFS} & \textbf{IFS Detects?} \\
\midrule
Structural only (term/time/IC)       &  7.4\% & 0.60 & Yes \\
BC only                              & 14.4\% & 0.74 & Yes \\
Structural + coefficient             & 32.8\% & 0.64 & Yes (structural) \\
Mixed structural                     & 32.9\% & 0.25 & Yes \\
\midrule
Coefficient/material only            & 12.5\% & 0.94 & No (structural); MCS \\
\bottomrule
\end{tabular}
\end{table}

\begin{table}[t]
\centering
\caption{Cross-DSL proof-of-concept results (DeepSeek V4 Flash). ``Deploy'' uses Direct generation when $\mathrm{IFS}_{\mathrm{Direct}} \geq 0.7$ and physics-contract generation otherwise. $\Delta_{\mathrm{hard}}$ is the PDE-Refine$-$Direct improvement on the hard subset ($\mathrm{IFS}_{\mathrm{Direct}} < 0.7$).}
\label{tab:crossdsl}
\small
\begin{tabular}{@{}llrccccc@{}}
\toprule
\textbf{DSL} & \textbf{Slice} & $\boldsymbol{n}$ & \textbf{Direct} & \textbf{PDE-Refine} & \textbf{Deploy} & $\boldsymbol{\Delta_{\mathrm{hard}}}$ & $\boldsymbol{n_{\mathrm{hard}}}$ \\
\midrule
\multirow{3}{*}{UFL} & Five-family & 25 & 0.779 & 0.779 & \textbf{0.851} & +0.455 & 4 \\
 & Hard-stress & 30 & 0.580 & 0.749 & \textbf{0.791} & +0.486 & 13 \\
 & Documentation-source & 12 & 0.857 & 0.984 & \textbf{0.984} & +0.764 & 2 \\
\midrule
FreeFEM & Documentation-source & 11 & 0.862 & 0.926 & \textbf{0.919} & +0.207 & 3 \\
FiPy & Documentation-source & 7 & 0.769 & 0.943 & \textbf{0.933} & +0.574 & 2 \\
Devito & Documentation-source & 9 & 0.793 & 0.831 & \textbf{0.829} & +0.109 & 3 \\
\midrule
\multicolumn{2}{l}{\textbf{FreeFEM/FiPy/Devito combined}} & \textbf{27} & \textbf{0.815} & \textbf{0.899} & \textbf{0.893} & \textbf{+0.262} & \textbf{8} \\
\bottomrule
\end{tabular}
\end{table}

\subsection{Cross-DSL Generalization}
\label{sec:exp:ufl}

We instantiate the same $\mathcal{P}$-reconstruction/IFS engine on \textbf{UFL/FEniCS}~\citep{alnaes2012ufl}, \textbf{FreeFEM}~\citep{freefem}, \textbf{FiPy}~\citep{guyer2009fipy}, and \textbf{Devito}~\citep{lange2016devito} via framework-specific parsers and static analysis (no runtime execution). \Cref{tab:crossdsl} reports documentation-derived slices and a UFL stress-test slice. The configured deployment setting, which uses physics contract generation only when $\mathrm{IFS}_{\mathrm{Direct}}<0.7$, improves the eight hard non-UFL cases by a mean IFS of $+0.262$, with the strongest gains on FiPy advection and FreeFEM Robin/Helmholtz cases (where DSL syntax differs from typical training data). Adoption requirements per framework 
are summarized in Table~\ref{tab:adoption-roadmap}.

\section{Discussion}
\label{sec:discussion}

\begin{figure}[t]
\centering
\includegraphics[width=0.95\linewidth]{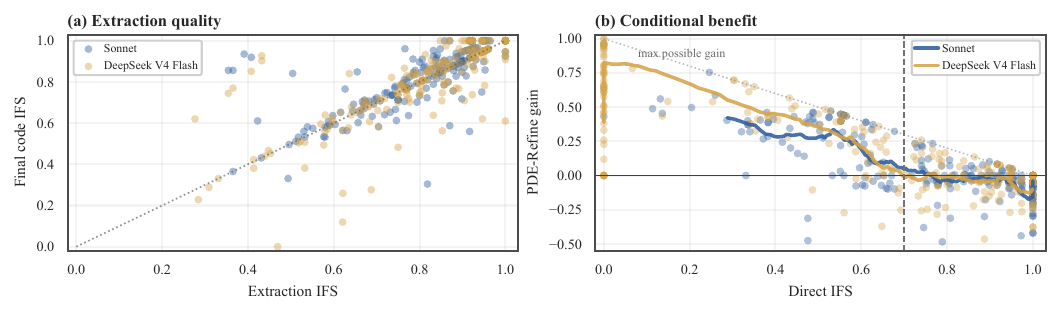}
\caption{Compact PDE-pipeline diagnostics for two representative 220-case sweeps. (a) PDE extraction quality predicts final code fidelity; the dotted diagonal is the $y=x$ reference, not a fitted trend. (b) PDE-Refine gains concentrate in low-Direct-IFS cases; trend curves summarize rolling means, and the dotted ceiling marks the maximum possible gain $1-\mathrm{IFS}_{\mathrm{Direct}}$.}
\label{fig:pipeline_diagnostics}
\end{figure}

\subsection{Empirical Validation: IFS Detects Structural Physics Errors}
\label{sec:discussion:validation}
Figure~\ref{fig:pipeline_diagnostics} summarizes the empirical
mechanism. Panel~(a) shows that PDE extraction quality predicts
final code fidelity: cases with high $\mathcal{P}_{\mathrm{llm}}$
fidelity also achieve high final IFS, supporting the
extraction--synthesis factorization underlying mixed-model transfer
(Section~\ref{sec:discussion:mixed_model}). Panel~(b) shows that
PDE-Refine gains concentrate in low-Direct-IFS cases, with the
trend curve approaching the maximum possible gain
$1-\mathrm{IFS}_{\mathrm{Direct}}$ in the hard regime.

To validate that IFS captures physically meaningful differences, not merely syntactic discrepancies, we construct 30 MOOSE-executed perturbation pairs spanning structural, boundary-value, and coefficient errors. The set combines controlled edits with perturbations applied to real MooseBench ground-truth files, matching the observed LLM error taxonomy (\Cref{tab:error_taxonomy}). Both versions are run through MOOSE, and we report the final-field relative error $E_{L^2}$ on the common output grid (definition in Appendix~\ref{app:mcs}). All simulations converge successfully, while execution-based testing detects none of these physics errors.

\Cref{fig:ifs_mcs_validation}a validates IFS's structural boundary rather than calibrating it to numerical error: structural perturbations fall below high IFS, while coefficient-magnitude errors can remain near $\mathrm{IFS}\approx1.0$ despite nontrivial field error. This blind spot motivates MCS as a conditional secondary diagnostic, not a replacement for IFS. \Cref{fig:ifs_mcs_validation}b visualizes the complementary diagnostic: on the 220-case GPT-5.4$\rightarrow$DeepSeek V4 Flash PDE-Refine sidecars, MCS identifies 32 high-IFS/low-MCS cases among 176 comparable outputs; point color marks the main coefficient/material mismatch source and black rings mark the repair subset. Appendix~\ref{app:mcs}, Table~\ref{tab:mcs_contract}, reports that the 22-case repair diagnostic raises mean MCS from 0.316 to 0.985 while preserving structural IFS.

\begin{figure}[t]
\centering
\includegraphics[width=0.95\linewidth]{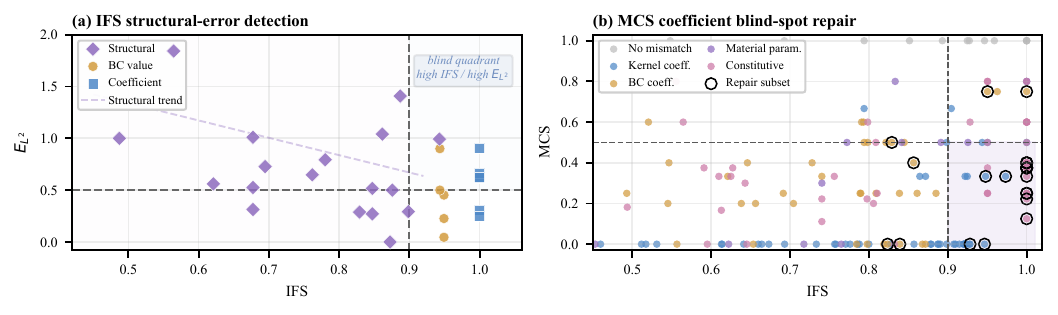}
\caption{IFS/MCS validation diagnostics. (a) IFS validation on 30 MOOSE-verified perturbation pairs: structural and BC-value errors fall outside the blind quadrant, whereas coefficient errors can remain high-IFS with large field error; all simulations converge. (b) MCS coefficient/material blind-spot repair: the shaded quadrant marks high-IFS/low-MCS cases, colors denote mismatch type, and black rings mark repaired cases.}
\label{fig:ifs_mcs_validation}
\end{figure}


\subsection{Mixed-Model PDE Contract Transfer}
\label{sec:discussion:mixed_model}

Because PDE-Refine factors extraction from synthesis, we test whether $\mathcal{P}_{\mathrm{llm}}$ transfers across tested model families (Appendix~\ref{app:extra_figures}, Table~\ref{tab:cross_ablation}). GPT-5.4 extraction with DeepSeek generation reaches 0.772 IFS, between DeepSeek self (0.765) and GPT self (0.792); Gemini extraction with DeepSeek shows the same ordering (0.784 vs.\ 0.765 and 0.796). Contracts transfer across the tested model families, but final quality remains bounded by the generator.

\subsection{Scope and Limitations}
\label{sec:discussion:limitations}

\textbf{Scope.}
IFS checks structural correspondence in the covered MOOSE fragment: terms, BCs, ICs, direct coefficient checkpoints represented in the physics contract, and time scheme. It does not verify mesh quality, solver convergence, discretization error, formal soundness, agreement with measurements, or coefficient/material-only facts outside that reconstructed structural contract. Custom objects and specialized kernel branches outside the covered semantic map require registered mappings for representation-relative claims and are tracked through $\mathcal{U}$ when encountered. Cross-DSL extensions are symbolic-PDE proof-of-concepts, not runtime verification. MCS is the conditional diagnostic for comparable coefficient/material blind spots.
\FloatBarrier
\section{Conclusion}
\label{sec:conclusion}

This paper addresses the comprehension--generation gap for AI-generated MOOSE simulations by separating object executability from PDE intent fidelity. Deterministic reconstruction and IFS turn silent physics mismatches into violation reports; on 220-case MooseBench, PDE-Refine improves fidelity most on weak models and hard cases (+0.22 to +0.41 IFS), and under a frozen object-realization registry, PDE-Reg reduces runnable-but-wrong outputs (Sonnet~4.6 FalseExec 40.0\% $\to$ 31.4\%) relative to execution-only repair. Mixed-model, MCS, and cross-DSL diagnostics define the boundary of the approach: contracts transfer across tested model families, coefficient/material facts require a secondary diagnostic, and reconstruction-and-comparison extends to DSLs with compositional PDE semantics. The results support treating execution as a runtime-validity check, not as a proxy for the structural fidelity of the encoded physics.
\section*{Code Availability}

The code for this work is publicly available at:
\begin{center}
\url{https://github.com/HaningZS/moose-ifs.git}
\end{center}

\input{neurips_2026_draft.bbl}

\newpage
\appendix
\section{Details of the Silent-Failure Simulations}
\label{app:simulation_error}

Figure~\ref{fig:simulation_error} uses a diffusion problem on a perforated rectangular domain
\[
\Omega=\Omega_0\setminus\overline{\Omega}_{\rm cav},\qquad
\Omega_0=[-3.5,3.5]\times[-2,2],
\qquad
\Omega_{\rm cav}=\{(x,y):x^2+y^2<R^2\},\quad R=0.65 .
\]
The intended transient model is
\[
\partial_t c-\nabla\!\cdot(D(\mathbf{x})\nabla c)=s(\mathbf{x},t)
\quad \text{in } \Omega,
\qquad
\nabla c\cdot\mathbf{n}=0 \quad \text{on } \Gamma_{\rm cav},
\qquad
c(\mathbf{x},0)=c_0(\mathbf{x}).
\]
Steady-state cases are obtained by omitting the transient term \(\partial_t c\).

The five columns perturb different parts of this specification. Panel (a) replaces the no-flux cavity wall with an absorbing cavity boundary. Panel (b) omits the source/sink term. Panel (c) replaces the heterogeneous diffusivity \(D(\mathbf{x})\) with a homogeneous coefficient. Panel (d) solves the steady-state problem instead of the intended transient problem. Panel (e) uses an incorrect initial condition. All fields are normalized for visualization.

\section{Weak-Form Derivation and MOOSE Input Example}
\label{app:darcy}
To illustrate the kernel--PDE correspondence, consider the zero-gravity, divergence-free form of Darcy's pressure equation:
\[
-\nabla \cdot \left(\frac{K}{\mu}\nabla p\right) = 0 \quad \text{in } \Omega .
\]
Multiplying by a test function $\psi$ and integrating by parts gives the weak-form residual
\begin{equation}
\underbrace{
\int_\Omega \nabla \psi \cdot \frac{K}{\mu}\nabla p \, d\Omega
}_{\text{Kernel contribution}}
\;-\;
\underbrace{
\int_{\partial\Omega} \psi
\left(\frac{K}{\mu}\nabla p \cdot \mathbf{n}\right)\, d\Gamma
}_{\text{BC contribution}}
= 0 .
\end{equation}
The volume term is assembled by a MOOSE Kernel, while the boundary term is imposed through a BC object. The corresponding MOOSE input file is:

\begin{lstlisting}[language=moose, caption={MOOSE input for Darcy's pressure law.}, label={lst:darcy}]
[Kernels]
  [./diff]
    type = DarcyFluxPressure
    variable = pressure
  [../]
[]

[BCs]
  [./left_bc]
    type = DirichletBC
    variable = pressure
    boundary = left
    value = 1.0e6
  [../]
  [./right_bc]
    type = DirichletBC
    variable = pressure
    boundary = right
    value = 0.0
  [../]
[]

[Materials]
  [./porous]
    type = GenericConstantMaterial
    prop_names = 'permeability viscosity'
    prop_values = '1e-12 1e-3'
  [../]
[]
\end{lstlisting}

\section{Full System Architecture}
\label{app:architecture}

The system factors generation through an explicit physics contract and verifies the generated input by reconstructing its encoded contract. The deterministic reconstruction step is formalized in Algorithm~\ref{alg:reconstruct}.

\begin{algorithm}[h]
\caption{Deterministic PDE Reconstruction from MOOSE Input}
\label{alg:reconstruct}
\begin{algorithmic}[1]
\REQUIRE MOOSE input file AST $\mathcal{A}$, Kernel--PDE mapping $\mathcal{M}$
\ENSURE Physics contract $\mathcal{P}_{\mathrm{code}}$
\STATE $\mathcal{T} \leftarrow \emptyset$; $\mathcal{B} \leftarrow \emptyset$; $\mathcal{I} \leftarrow \emptyset$; $\mathcal{C} \leftarrow \{\}$; $\mathcal{U} \leftarrow \emptyset$
\STATE $\mathcal{S} \leftarrow$ \textsc{Transient} if $\mathcal{A}$\texttt{.Executioner.type} = \texttt{Transient} else \textsc{Steady}
\FOR{each kernel $\kappa$ in $\mathcal{A}$\texttt{.Kernels}}
    \IF{$\kappa.\texttt{type} \notin \mathcal{M}$}
        \STATE $\mathcal{U} \leftarrow \mathcal{U} \cup \{\kappa.\texttt{type}\}$; \textbf{continue} \COMMENT{unresolved}
    \ENDIF
    \STATE $(v_\kappa,o_\kappa,\alpha_\kappa) \leftarrow \mathcal{M}[\kappa.\texttt{type}](\kappa.\texttt{params})$
    \STATE $\mathcal{T} \leftarrow \mathcal{T} \cup \{(v_\kappa,o_\kappa,\alpha_\kappa)\}$
\ENDFOR
\FOR{each BC $\beta$ / IC $\iota$ / material $\mu$}
    \STATE $\mathcal{B} \leftarrow \mathcal{B} \cup \{(\beta.\texttt{type}, \beta.\texttt{boundary}, \beta.\texttt{value})\}$; similarly for $\mathcal{I}$, $\mathcal{C}$
\ENDFOR
\RETURN $(\mathcal{T}, \mathcal{B}, \mathcal{I}, \mathcal{C}, \mathcal{S}, \mathcal{A}\texttt{.Mesh}, \mathcal{U})$
\end{algorithmic}
\end{algorithm}

\paragraph{Stage 1: PDE Intent Extraction.}
The extractor receives a natural-language simulation description and produces $\mathcal{P}_{\mathrm{llm}}$, a structured physics contract containing governing terms, BCs, ICs, coefficient/material facts, and time scheme. The output is parsed as JSON before being passed downstream.

\paragraph{Stage 2: Specification-Guided Code Synthesis.}
The generator receives $\mathcal{P}_{\mathrm{llm}}$ and writes a MOOSE input file. The contract is used as a checklist for terms, BCs, ICs, materials, and time scheme rather than as a hidden ground truth.

\paragraph{Stage 3: PDE Reconstruction and Verification.}
The verifier parses the generated input, extracts kernel, BC, IC, and material blocks, and applies the kernel--PDE mapping to reconstruct $\mathcal{P}_{\mathrm{code}}$. It computes $\mathrm{IFS}(\mathcal{P}_{\mathrm{llm}}, \mathcal{P}_{\mathrm{code}})$ and emits a dimension-wise violation report; when comparable coefficient/material facts exist, the same parse also yields MCS.

\paragraph{Stage 4: Error-Guided Refinement.}
If $\mathrm{IFS} < \tau_{\mathrm{IFS}}$, the refiner receives the current code and the structured violation report, then makes targeted edits. The revised code is re-verified until convergence or $N_{\max}$ iterations.

\paragraph{Prompt-level self-checking.}
Generation and refinement prompts ask the model to identify governing equations, select MOOSE objects, and check the file against the provided contract before returning the final input. This is prompt structure rather than an additional verification signal.

\paragraph{Iteration and regression guard.}
The refinement loop is bounded by $N_{\max}$ and controlled by the IFS threshold. A regression guard keeps the best-scoring candidate if a refinement step lowers IFS.

\paragraph{Mixed-LLM configuration.}
Because PDE extraction and code generation are separated by an explicit
$\mathcal{P}_{\mathrm{llm}}$ contract, the framework can assign different models to each
stage. The 220-case mixed-model ablation in \Cref{tab:cross_ablation} uses this
configuration to cross GPT-5.4, Gemini 3.1 Flash Lite, and DeepSeek V4 Flash as
extractor/generator pairs. The cross-model rows diagnose whether the physics
contract transfers across providers rather than measuring a new leaderboard method.

\paragraph{Execution-based refinement (Exec-Repair) prompt construction.}
The Exec-Repair baseline follows the standard self-refinement paradigm~\citep{madaan2023self,self_debugging}: the LLM receives its previously generated code along with the generic instruction ``Review your code for correctness and fix any issues.'' No physics-specific guidance, PDE-level analysis, or structured violation report is provided. When the LLM encounters blocks referencing undefined variables or material properties, a common repair is \emph{deletion}: removing the problematic block to eliminate the error. This can remove physically necessary kernels. For example, deleting a \texttt{TimeDerivative} kernel that referenced an undefined material property resolves the runtime error but changes the PDE from parabolic to elliptic. The resulting code executes successfully but solves a different problem.

\section{Official-Document RAG Diagnostics}
\label{app:rag_baseline}

The official-document diagnostics isolate document access from PDE-level feedback. They test whether realistic MOOSE documentation snippets, without benchmark answers or PDE violation reports, explain the gains from PDE-grounded verification.

\paragraph{Corpus.}
The main corpus is an official MOOSE object-documentation corpus whose records expose object names, descriptions, and documentation text. The runner also provides core HIT syntax snippets as fallback context. Retrieval is a deterministic keyword top-$k$ over documentation snippets, with at most eight retrieved snippets and a 6000-character prompt budget. No vector index, case library, or generated-answer corpus is used.

\paragraph{Diagnostic variants.}
DocsDirect retrieves official documentation before direct code generation. Docs+ExecRepair adds the same object-realization registry and one runtime-log repair round without PDE/IFS feedback, while Docs+PDE-Reg adds the same documentation context to PDE-Reg. These diagnostics test whether official-document access and execution repair alone explain the PDE-grounded gains.

\paragraph{Leakage controls.}
The RAG corpus excludes MooseBench reference inputs, $\mathcal{P}_{\mathrm{gt}}$, generated answers, cached benchmark outputs, and annotated input-card or case-card corpora. In particular, records with answer-like fields such as \texttt{input\_card\_name}, \texttt{annotated\_input\_card}, or \texttt{overall\_description} are not used, and full HIT input-card shaped records are skipped. DocsDirect and Docs+ExecRepair receive no deterministic PDE reconstruction or IFS violation report. Docs+PDE-Reg uses the same PDE-level feedback as non-document PDE-Reg, but receives no benchmark reference input, no $\mathcal{P}_{\mathrm{gt}}$, and no execution/check-input feedback. Each JSONL result stores \texttt{rag\_sources}; in the 220-case DeepSeek V4 Flash runs used for \Cref{tab:rag_baseline}, all records had non-empty official-document sources.

\paragraph{Interpretation.}
DocsDirect tests documentation access alone; Docs+ExecRepair tests documentation plus execution-only repair; Docs+PDE-Reg tests whether PDE-grounded repair also benefits from the same documents. The results show that official documentation helps Direct generation, but docs plus execution repair still leaves a large FalseExec region. Documentation-conditioned generation does not by itself solve structural PDE alignment or material-value consistency; IFS and MCS handle those semantic checks separately.

\section{Kernel--PDE Mapping Table}
\label{app:kernel_table}

\Cref{tab:full_kernel} provides the covered kernel--PDE weak-form mapping table used for MooseBench, including source-grounded weak-form contributions and normalized operator types.

\small
\input{full_kernel_table.tex}
\normalsize

\begin{table}[htbp]
\centering
\caption{Representative boundary-condition mappings in MOOSE; these follow the same normalization logic as the kernel--PDE mappings.}
\label{tab:bc_mapping}
\small
\begin{tabular}{@{}lll@{}}
\toprule
\textbf{MOOSE BC} & \textbf{Mathematical Form} & \textbf{Type} \\
\midrule
\texttt{DirichletBC} & $u = g \quad \text{on } \partial\Omega_D$ & Dirichlet (essential) \\
\texttt{FunctionDirichletBC} & $u = g(t,\mathbf{x}) \quad \text{on } \partial\Omega_D$ & Dirichlet (space/time-dependent) \\
\texttt{FunctorDirichletBC} & $u = h(t,\mathbf{x},\ldots) \quad \text{on } \partial\Omega_D$ & Dirichlet (functor-based) \\
\texttt{PenaltyDirichletBC} & $u = g \quad \text{on } \partial\Omega_D$ & Dirichlet (penalty/weak) \\
\texttt{PostprocessorDirichletBC} & $u = g_{\mathrm{pp}} \quad \text{on } \partial\Omega_D$ & Dirichlet (postprocessor-driven) \\
\texttt{NeumannBC} & $\dfrac{\partial u}{\partial n} = h \quad \text{on } \partial\Omega_N$ & Neumann (prescribed flux) \\
\texttt{FunctionNeumannBC} & $\dfrac{\partial u}{\partial n} = h(t,\mathbf{x}) \quad \text{on } \partial\Omega_N$ & Neumann (space/time-dependent) \\
\texttt{FunctorNeumannBC} & $\dfrac{\partial u}{\partial n} = h(t,\mathbf{x},\ldots) \quad \text{on } \partial\Omega_N$ & Neumann (functor-based) \\
\texttt{CoupledVarNeumannBC} & $\dfrac{\partial u}{\partial n} = v \quad \text{on } \partial\Omega_N$ & Neumann (coupled-variable) \\
\texttt{MatNeumannBC} & $\dfrac{C\,\partial u}{\partial n} = M\,h \quad \text{on } \partial\Omega_N$ & Neumann (material-scaled flux) \\
\texttt{FunctionGradientNeumannBC} & $-\langle \psi_i,\mathbf{n}\!\cdot\! k\nabla f\rangle \quad \text{on } \partial\Omega_N$ & Neumann (gradient-based weak form) \\
\texttt{ConvectiveHeatFluxBC} & $\mathbf{q}\cdot\hat{\mathbf{n}} = h(T-T_\infty) \quad \text{on } \partial\Omega$ & Robin (convective heat transfer) \\
\texttt{VacuumBC} & $\displaystyle \int_{\partial\Omega}\frac{\alpha\,u(\mathbf{r})\,\psi_t(\mathbf{r})}{2}\,dS$ & Robin (vacuum-type) \\
\texttt{DiffusionFluxBC} & $-\displaystyle\int_{\partial\Omega_F}\frac{\partial u}{\partial n}\,v\,ds$ & Flux contribution (residual only) \\
\texttt{Periodic BC} & $u(\mathbf{x}_s)=u(\mathbf{x}_m)$ on paired boundaries & Periodic constraint \\
\texttt{PorousFlowSink} & $s=f(t,\mathbf{x}) \quad \text{on } \partial\Omega$ & PorousFlow sink/source flux BC \\
\texttt{PorousFlowPiecewiseLinearSink} & $s=f(t,\mathbf{x})\,g(P^\beta-P_e) \quad \text{on } \partial\Omega$ & PorousFlow piecewise-linear sink/source BC \\
\texttt{PorousFlowOutflowBC} & $\displaystyle \int_{\partial\Omega}\psi\,\mathbf{n}\cdot\mathbf{F}$ & PorousFlow outflow flux contribution \\
\bottomrule
\end{tabular}
\end{table}

\section{MCS Definition and Boundary Diagnostics}
\label{app:mcs}

MCS is a limitation-aware diagnostic for the coefficient/material blind spot of
structural IFS. It is reported only when comparable material or coefficient
facts can be extracted from both reference and candidate files. For the MOOSE
perturbation study in \Cref{fig:ifs_mcs_validation}a, we compute
$E_{L^2}=\|\mathbf{u}_{\mathrm{pert}}-\mathbf{u}_{\mathrm{gt}}\|_2/
\max(\|\mathbf{u}_{\mathrm{gt}}\|_2,10^{-15})$ on the common output grid, where
$\mathbf{u}_{\mathrm{pert}}$ and $\mathbf{u}_{\mathrm{gt}}$ are the concatenated
final solution vectors.

\begin{definition}[MCS diagnostic facts]
\label{def:mcs_contract}
Let $\mathcal{F}(c)$ be the set of normalized coefficient/material facts
extracted from a MOOSE input file, including material-backed kernel
coefficients, convective/Robin BC coefficients, material model signatures, and
uncovered constitutive parameters. For comparable reference and candidate facts,
\begin{equation}
\mathrm{MCS}(c_{\mathrm{ref}}, c_{\mathrm{cand}}) =
\frac{1}{|\mathcal{F}_{\mathrm{ref}}|}
\sum_{f \in \mathcal{F}_{\mathrm{ref}}}
\mathbf{1}\!\left[\exists g \in \mathcal{F}_{\mathrm{cand}}:
\operatorname{key}(g)=\operatorname{key}(f) \wedge \operatorname{value}(g)\approx\operatorname{value}(f)\right].
\end{equation}
Here $\approx$ denotes the normalized fact comparator: scalar values use the
same reference-relative numerical tolerance as $\approx_{\delta}$ in
\Cref{def:coefficient_tolerant_match}, categorical
material signatures use exact normalized matching, and structured values are
compared by their normalized representation.
MCS is undefined when no comparable reference facts exist. It is a
coefficient/material diagnostic for high-IFS blind spots, not part of IFS term
matching or descriptor-level structural auditing.
\end{definition}

\begin{figure}[htbp]
\centering
\includegraphics[width=0.92\linewidth]{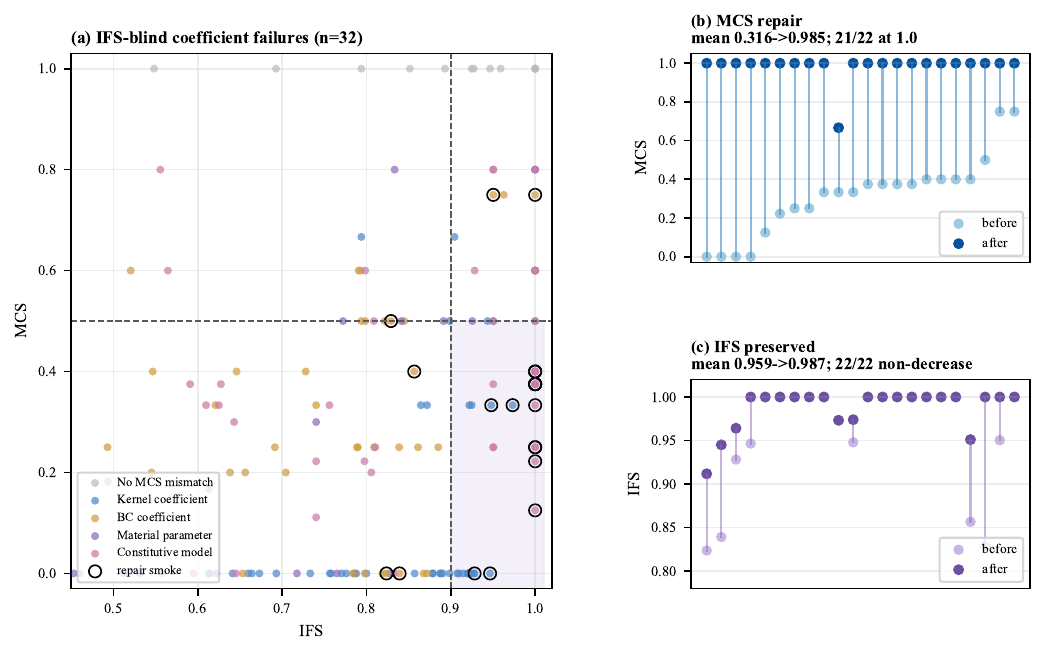}
\caption{Full MCS blind-spot and repair diagnostic, referenced from \Cref{sec:discussion:validation}. The left panel shows high-IFS/low-MCS coefficient/material blind spots; black rings mark the 22-case repair subset. The right panels show that MCS repairs remove almost all coefficient/material mismatches while preserving structural IFS.}
\label{fig:mcs_blindspot_repair_full}
\end{figure}

\begin{table}[htbp]
\centering
\small
\caption{MCS coefficient/material diagnostics. The survey uses the 220-case post-guard GPT-5.4$\rightarrow$DeepSeek V4 Flash PDE-Refine sidecars; the repair diagnostic uses 22 representative blind-spot cases repaired by DeepSeek V4 Flash from only the MCS mismatch facts and the candidate code.}
\label{tab:mcs_contract}
\begin{tabular}{@{}lcc@{}}
\toprule
\textbf{Quantity} & \textbf{Survey / before repair} & \textbf{After repair} \\
\midrule
Comparable generated outputs & 176 / 220 & -- \\
High-IFS / low-MCS outputs & 32 & -- \\
Mismatch facts & \multicolumn{2}{c}{kernel 125; BC 79; material 279; model 110} \\
\midrule
Repair smoke cases & 22 & 22 \\
Accepted local repairs & -- & 22 / 22 \\
Mean IFS & 0.959 & 0.987 \\
Mean MCS & 0.316 & 0.985 \\
Mean MCS mismatches & 3.91 & 0.05 \\
Cases with MCS$=1.0$ & 0 / 22 & 21 / 22 \\
\bottomrule
\end{tabular}
\end{table}

\begin{table}[htbp]
\centering
\small
\caption{Relationship between IFS and MCS in deployment. IFS remains the main structural gate; MCS is a conditional secondary check for explicit material/coefficient facts.}
\label{tab:ifs_mcs_gate}
\begin{tabular}{@{}llll@{}}
\toprule
\textbf{Gate} & \textbf{Purpose} & \textbf{Always available?} & \textbf{Main-text role} \\
\midrule
IFS & PDE structural fidelity & Yes, within covered fragment & Main metric \\
MCS & Explicit material/coefficient consistency & Only when comparable facts exist & Boundary diagnostic \\
\bottomrule
\end{tabular}
\end{table}

\section{Operator-Type Formalization}
\label{app:formalization}

This appendix formalizes the two-level representation used by reconstruction and IFS: normalized PDE operator types are the comparison keys, while weak-form descriptors are structural witnesses for auditing the mapping table.

\begin{definition}[Operator sets]
\label{def:operator_sets}
\begin{align}
\mathcal{O}_{\mathrm{trial}} &= \{\texttt{identity}, \texttt{grad}, \texttt{div}, \texttt{curl}, \texttt{ddt}, \texttt{ddt2}, \texttt{stress}, \texttt{strain}\} \\
\mathcal{O}_{\mathrm{test}} &= \{\texttt{identity}, \texttt{grad}, \texttt{div}, \texttt{curl}\} \\
\mathcal{G} &= \{\texttt{scalar\_mult}, \texttt{dot}, \texttt{double\_dot}\}
\end{align}
$\mathcal{O}_{\mathrm{trial}}$ describes operators on the trial-side quantity, $\mathcal{O}_{\mathrm{test}}$ describes operators on the test function, and $\mathcal{G}$ describes the contraction mode.
\end{definition}

\begin{definition}[Normalized PDE operator type and descriptor projection]
\label{def:operator_type_projection}
Let $\mathcal{O}_{\mathrm{PDE}}$ be the finite inventory of normalized PDE operator types in the kernel--PDE mapping table. When an operator type has a weak-form witness, the table records the partial descriptor projection
\begin{equation}
d:\mathcal{O}_{\mathrm{PDE}}\rightharpoonup
\mathcal{O}_{\mathrm{trial}}\times\mathcal{O}_{\mathrm{test}}\times\mathcal{G}.
\end{equation}
Here $\rightharpoonup$ denotes a partial function: $d(o)$ is defined only when the operator type $o$ has an associated weak-form descriptor in the mapping table.
\end{definition}

\begin{remark}[Role of the descriptor]
\label{rem:descriptor_role}
The operator type $o\in\mathcal{O}_{\mathrm{PDE}}$ is the term-comparison key; the descriptor $d(o)$ is its weak-form witness. Together they make the kernel--PDE mapping table auditable: operator-type assignments are justified by the corresponding weak-form contributions recorded in the table, while IFS scoring uses normalized operator-type equality.
\end{remark}

\begin{definition}[Bound PDE term]
\label{def:bound_pde_term}
\begin{equation}
\mathrm{BoundTerm} := (v,\; o,\; \alpha), \qquad
v\in\mathrm{Variables},\; o\in\mathcal{O}_{\mathrm{PDE}},\;
\alpha\in\mathbb{R}\cup\{\bot\}.
\end{equation}
Here $\alpha=\bot$ denotes a unit coefficient (unspecified or identically~1). Descriptor metadata remains attached to $o$ through the mapping table; it is not part of the runtime term key.
\end{definition}

\begin{definition}[Kernel--PDE mapping]
\label{def:kernel_weakform_mapping}
$\mathcal{M}$ maps each covered MOOSE kernel class deterministically to a normalized PDE operator type, coefficient extraction rule, severity weight, and optional descriptor witness. Applied to a concrete kernel block $\kappa$, it yields a bound term $(v_\kappa,o_\kappa,\alpha_\kappa)$. The paper table documents 762 of 766 MooseBench ground-truth kernel-term instances (99.48\%) with source-grounded weak-form contributions; BC and IC maps contain 16 and 21 entries, respectively.
\end{definition}

\begin{definition}[Descriptor structural equivalence]
\label{def:structural_equivalence}
For available descriptors $d_a=(\tau_a,\phi_a,\gamma_a)$ and $d_b=(\tau_b,\phi_b,\gamma_b)$,
\begin{equation}
d_a \sim_s d_b \iff \tau_a = \tau_b \wedge \phi_a = \phi_b \wedge \gamma_a = \gamma_b .
\end{equation}
\end{definition}

\begin{proposition}
\label{prop:structural_equivalence_relation}
$\sim_s$ is an equivalence relation on available weak-form descriptors.
\end{proposition}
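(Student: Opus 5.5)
The relation $\sim_s$ is defined as componentwise equality of triples in the finite product $\mathcal{O}_{\mathrm{trial}}\times\mathcal{O}_{\mathrm{test}}\times\mathcal{G}$ (\Cref{def:structural_equivalence}). The plan is to reduce \Cref{prop:structural_equivalence_relation} to the fact that equality on each factor set is an equivalence relation, and that a conjunction of equivalence relations on the components of a product is again one. Concretely, I will show that $d_a\sim_s d_b$ holds iff $d_a=d_b$ as elements of the product set. Equality of ordered triples is equivalent to equality of all three coordinates, so $\sim_s$ is literally the identity relation restricted to the set of available descriptors $D:=\{d(o): o\in\operatorname{dom}(d)\}$.

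Since this identification could be viewed as too slick, I will also check the three axioms directly, in order.

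\textbf{Reflexivity.} For any available $d_a=(\tau_a,\phi_a,\gamma_a)$, each component equals itself, so $d_a\sim_s d_a$.

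\textbf{Symmetry.} If $\tau_a=\tau_b$, $\phi_a=\phi_b$ and $\gamma_a=\gamma_b$, then symmetry of equality in each of $\mathcal{O}_{\mathrm{trial}}$, $\mathcal{O}_{\mathrm{test}}$ and $\mathcal{G}$ gives $d_b\sim_s d_a$.

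\textbf{Transitivity.} From $d_a\sim_s d_b$ and $d_b\sim_s d_c$, chaining the componentwise equalities gives $\tau_a=\tau_c$, $\phi_a=\phi_c$ and $\gamma_a=\gamma_c$, hence $d_a\sim_s d_c$.

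The only point needing care is the domain. Because $d$ is a partial function (\Cref{def:operator_type_projection}), the relation is defined only on available descriptors, i.e.\ on the image set $D$. The statement should therefore be read as a relation on $D$, not on operator types; all three axioms are checked only for elements of $D$, so partiality causes no difficulty. I will add a remark that the induced relation on $\operatorname{dom}(d)$, namely $o\sim o'$ iff $d(o)\sim_s d(o')$, is also an equivalence relation, being the pullback of equality along $d$. The same is not true on all of $\mathcal{O}_{\mathrm{PDE}}$, where reflexivity fails for operator types without a descriptor. This domain bookkeeping is the closest thing to an obstacle; otherwise the proof is routine.
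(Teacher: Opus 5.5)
Your proposal is correct and takes the same approach as the paper, whose proof simply notes that $\sim_s$ is the conjunction of equality on the three descriptor components. Your explicit checks of reflexivity, symmetry and transitivity, and your identification of $\sim_s$ with the identity relation on the set of available descriptors, spell out that one-line argument; the remark about the domain of the partial map $d$ is a reasonable clarification but the proof does not need it.
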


\begin{proof}
$\sim_s$ is the conjunction of equality on three descriptor components.
\end{proof}

\begin{remark}[Audit-only structural relation]
\label{rem:descriptor_audit_only}
$\sim_s$ is used only for descriptor-level auditing; IFS scoring uses operator-type equality on $\mathcal{O}_{\mathrm{PDE}}$ and does not depend on $\sim_s$.
\end{remark}

\begin{definition}[Coefficient-tolerant match]
\label{def:coefficient_tolerant_match}
For an ordered reference--candidate pair of bound terms
$t_{\mathrm{ref}}=(v_{\mathrm{ref}},o_{\mathrm{ref}},\alpha_{\mathrm{ref}})$
and $t_{\mathrm{cand}}=(v_{\mathrm{cand}},o_{\mathrm{cand}},\alpha_{\mathrm{cand}})$,
\begin{equation}
t_{\mathrm{ref}} \approx_{\delta} t_{\mathrm{cand}}
\iff
v_{\mathrm{ref}}=v_{\mathrm{cand}} \wedge
o_{\mathrm{ref}}=o_{\mathrm{cand}} \wedge
\frac{|\alpha_{\mathrm{ref}}-\alpha_{\mathrm{cand}}|}
{\max(|\alpha_{\mathrm{ref}}|,\epsilon_0)}
\leq \delta_{\mathrm{coef}} .
\end{equation}
Here $\delta_{\mathrm{coef}}$ is the coefficient relative tolerance (default 0.1), and $\epsilon_0$ prevents division by zero.
\end{definition}

\begin{remark}[Reference-relative coefficient matching]
\label{rem:reference_relative_matching}
The predicate $\approx_{\delta}$ is reference-relative: for $\delta_{\mathrm{coef}}>0$ it is generally neither symmetric nor transitive. This asymmetry is intentional, because IFS scores candidate checkpoints against the fixed reference-induced set $\mathcal{Q}(\mathcal{P}_{\mathrm{ref}})$ rather than comparing two terms as an unordered pair. The reference coefficient $\alpha_{\mathrm{ref}}$ defines the tolerance scale, consistent with $\mathcal{P}_{\mathrm{ref}}$ serving as ground truth during evaluation and as the LLM-extracted contract during deployment-time refinement.
\end{remark}

\paragraph{Severity weight table.}
Each IFS checkpoint inherits a severity weight from the mapping table, reflecting the physical impact of the error type:

\begin{table}[htbp]
\centering
\caption{IFS severity weights (6-level, expert-defined).}
\small
\begin{tabular}{@{}lrl@{}}
\toprule
\textbf{Severity} & \textbf{Weight} & \textbf{Justification} \\
\midrule
Highest & 4.0 & Missing/wrong time derivative or inertial term (changes PDE class) \\
Very High & 3.0 & Wrong dominant operator (changes PDE structure) \\
High & 2.0 & Coupling term missing, BC/IC type wrong (breaks constraints) \\
Medium & 1.0 & Value-level change with correct operator (quantitative error) \\
Medium-Low & 0.7 & Source/forcing term missing (quantitative $>$ structural impact) \\
Low & 0.5 & Non-physics metadata issue \\
\bottomrule
\end{tabular}
\end{table}

\paragraph{Variable alignment procedure.}
Before IFS comparison, candidate variable names are aligned to reference names via operator-signature matching. For each variable $v$, its \emph{operator signature} $\sigma(v) = \operatorname{sort}(\{o_i : t_i \text{ acts on } v\})$ is the sorted set of normalized PDE operator types. A 1:1 mapping is found between reference and candidate variables with matching signatures. When multiple candidates share identical signatures (e.g., two species with identical diffusion--reaction equations), the mapping is disambiguated by: (1)~boundary condition alignment (matching boundary names and values), and if needed, (2)~coefficient values. All subsequent IFS comparisons use the aligned names.

\begin{definition}[Term set of a PDE]
\label{def:pde_term_set}
$\mathcal{T} = \{(v_i,o_i,\alpha_i)\}_{i=1}^{n}$, i.e., a PDE is represented as the set of bound normalized PDE terms.
\end{definition}

\begin{definition}[Term matching function]
\label{def:term_matching}
Given reference $\mathcal{T}_{\mathrm{ref}}$ and candidate $\mathcal{T}_{\mathrm{cand}}$:
\begin{equation}
\operatorname{match}(t_{\mathrm{ref}}) =
\begin{cases}
t_{\mathrm{cand}} & \text{if } \exists\, t_{\mathrm{cand}}\in\mathcal{T}_{\mathrm{cand}}:
v_{\mathrm{ref}}=v_{\mathrm{cand}} \wedge o_{\mathrm{ref}}=o_{\mathrm{cand}}, \\
\bot & \text{otherwise (missing term).}
\end{cases}
\end{equation}
Extra terms are $\mathcal{T}_{\mathrm{extra}}=\mathcal{T}_{\mathrm{cand}}\setminus\operatorname{range}(\operatorname{match})$.
\end{definition}

\begin{theorem}[Reconstruction soundness]
\label{thm:reconstruction_soundness}
Let $c$ be a syntactically valid MOOSE input file and $\mathcal{M}$ a correct kernel--PDE mapping. Then $\mathcal{P}_{\mathrm{code}}=\operatorname{reconstruct}(c,\mathcal{M})$ satisfies: (a) \emph{Completeness}: every covered active kernel has a corresponding bound term; (b) \emph{Fidelity}: each bound term uses the normalized operator type and descriptor witness, when defined, specified by $\mathcal{M}$; (c) \emph{No spurious terms}: $|\mathcal{T}| = |\{\text{covered active kernels}\}|$.
\end{theorem}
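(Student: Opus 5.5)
The plan is to argue by structural induction over the kernel loop of Algorithm~\ref{alg:reconstruct}, tracking an invariant on the partial term set $\mathcal{T}_k$ after the first $k$ active kernel blocks $\kappa_1,\dots,\kappa_k$ have been processed. Let $K_{\mathrm{cov}}(k)$ be the covered kernels among them, i.e.\ those with $\kappa.\texttt{type}\in\mathcal{M}$. The invariant is that there is a bijection $\beta_k: K_{\mathrm{cov}}(k)\to\mathcal{T}_k$ sending each $\kappa$ to $\mathcal{M}[\kappa.\texttt{type}](\kappa.\texttt{params})=(v_\kappa,o_\kappa,\alpha_\kappa)$. Because $c$ is syntactically valid, the AST exists, and the \texttt{Kernels} block is a finite list of named sub-blocks, so the loop terminates. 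Because $\mathcal{M}$ is deterministic (Definition~\ref{def:kernel_weakform_mapping}), the image of each kernel is well defined.

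\textbf{Base case and inductive step.} For the base case, $\mathcal{T}_0=\emptyset$ and $K_{\mathrm{cov}}(0)=\emptyset$. At step $k+1$ there are two branches. If $\kappa_{k+1}.\texttt{type}\notin\mathcal{M}$, only $\mathcal{U}$ changes, and both $\mathcal{T}$ and $K_{\mathrm{cov}}$ are unchanged. Otherwise exactly one tuple is inserted, and the bijection extends by $\kappa_{k+1}\mapsto(v,o,\alpha)$. At termination, (a) follows because $\beta$ is defined on every covered kernel. Part (b) follows because $\beta(\kappa)$ is literally $\mathcal{M}$'s output, and its operator type $o_\kappa$ carries the descriptor $d(o_\kappa)$ through the table whenever $d$ is defined (Definition~\ref{def:operator_type_projection}, Remark~\ref{rem:descriptor_role}). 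Correctness of $\mathcal{M}$ then means this matches the weak-form contribution of $\kappa$. Part (c) follows because $\beta$ is a bijection and no other line of the algorithm writes to $\mathcal{T}$; the BC/IC/material loop only touches $\mathcal{B},\mathcal{I},\mathcal{C}$.

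\textbf{Main obstacle.} The difficulty lies in (c) and in the bijectivity claim. $\mathcal{T}$ is a \emph{set}, so two covered kernels producing the identical tuple $(v,o,\alpha)$, for example duplicated \texttt{Diffusion} blocks on the same variable, would collapse into one element and break $|\mathcal{T}|=|\{\text{covered active kernels}\}|$. I would resolve this by making explicit the intended reading that bound terms are indexed by their originating block: either $\mathcal{T}$ is treated as a multiset, or each term is tagged with the block name $\kappa.\texttt{name}$, which is unique in a syntactically valid HIT file. This tag is carried implicitly and dropped only for term matching (Definition~\ref{def:term_matching}). With that convention, injectivity is immediate from name uniqueness.

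\textbf{Hypotheses to fix.} I would also state precisely what ``active'' means: kernels not disabled via \texttt{active}/\texttt{inactive} lists, with the AST taken after this filtering. I would likewise say what ``correct'' $\mathcal{M}$ means: for each covered type, the returned operator type and descriptor agree with that class's residual contribution. These are hypotheses, not things to prove. Once they are fixed, the remaining argument is the routine induction above.
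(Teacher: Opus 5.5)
Your proposal is correct and follows the same route as the paper. The paper's proof sketch argues by construction of Algorithm~\ref{alg:reconstruct}: the loop visits every kernel block, applies $\mathcal{M}$ to each covered one, adds exactly one bound term per covered kernel, and sends uncovered types to $\mathcal{U}$; your loop invariant with the bijection $\beta_k$ is a rigorous version of that argument. You also note that the set union $\mathcal{T}\cup\{(v_\kappa,o_\kappa,\alpha_\kappa)\}$ collapses identical tuples from duplicated blocks, so (c) holds only if $\mathcal{T}$ is read as a multiset or with block tags. This is a real imprecision that the paper's ``exactly one term per kernel'' sketch silently passes over, and your convention is the right way to close it.
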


\begin{proof}[Proof sketch]
By construction of \Cref{alg:reconstruct}: the algorithm iterates all kernel blocks, applies $\mathcal{M}$ to covered kernels, and adds exactly one bound term per covered kernel. Unknown kernels are tracked in \texttt{unresolved\_kernels} and reported diagnostically.
\end{proof}

\begin{theorem}[IFS soundness]
\label{thm:ifs_soundness}
If $\mathrm{IFS}(\mathcal{P}_{\mathrm{ref}}, \mathcal{P}_{\mathrm{cand}}) = 1.0$, then: (a) a bijection $\sigma: \mathcal{T}_{\mathrm{ref}} \to \mathcal{T}_{\mathrm{cand}}$ exists with matching variables and normalized PDE operator types; (b) all BC checkpoints pass their type, boundary, and value predicates; (c) all IC checkpoints pass their type and value predicates; (d) time schemes agree; (e) all reference coefficient checkpoints are matched by the candidate under $\approx_{\delta}$.
\end{theorem}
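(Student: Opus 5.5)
The plan is to turn the hypothesis $\mathrm{IFS}=1.0$ into the statement that every single checkpoint passes, and then read off (a)--(e) one checkpoint family at a time. By \eqref{eq:ifs}, $\mathrm{IFS}=1$ means $\Delta_{\mathrm{phys}}(\mathcal{P}_{\mathrm{ref}},\mathcal{P}_{\mathrm{cand}})=0$. In \eqref{eq:physics_discrepancy} the numerator is a sum of terms $w_j\cdot\mathbf{1}[\text{fail}_j]$, each nonnegative, and the denominator is positive because every $w_j>0$. A vanishing quotient therefore forces $\mathbf{1}[\text{fail}_j]=0$ for every $j\in\mathcal{Q}(\mathcal{P}_{\mathrm{ref}})$. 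This is the only quantitative step. Everything after it is bookkeeping over the checkpoint families of $\mathcal{Q}(\mathcal{P}_{\mathrm{ref}})$, namely terms, BCs, ICs, coefficients and time scheme. Throughout, variable names are read after the variable alignment procedure, so that ``matching variables'' means matching aligned names.

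Parts (b)--(e) follow directly. Each reference BC induces a checkpoint whose pass predicate is the conjunction of type, boundary and value agreement, so all of these pass, which gives (b). The IC checkpoints are handled the same way and give (c). The single time-scheme checkpoint passes, so $\mathcal{S}_{\mathrm{ref}}=\mathcal{S}_{\mathrm{cand}}$, which gives (d). Each reference coefficient checkpoint is by definition the predicate $t_{\mathrm{ref}}\approx_\delta \operatorname{match}(t_{\mathrm{ref}})$ of \Cref{def:coefficient_tolerant_match}. Its passing is exactly (e); by \Cref{rem:reference_relative_matching}, this one-directional statement is all that is claimed.

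For (a), I will define $\sigma:=\operatorname{match}$ from \Cref{def:term_matching}. Every reference term $t$ carries a term checkpoint that fails when $\operatorname{match}(t)=\bot$. All of these pass, so $\sigma$ is total on $\mathcal{T}_{\mathrm{ref}}$, and by definition it preserves variable and operator type.

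\emph{Injectivity.} The term set is keyed by (variable, normalized operator type), as stated in \Cref{sec:method:pde}. Consequently each key is realized by at most one term in $\mathcal{T}_{\mathrm{ref}}$ and at most one in $\mathcal{T}_{\mathrm{cand}}$. Two distinct reference terms have distinct keys, so they cannot map to the same candidate term.

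\emph{Surjectivity.} This requires $\mathcal{T}_{\mathrm{extra}}=\emptyset$. I expect this to be the main obstacle, because $\mathcal{Q}(\mathcal{P}_{\mathrm{ref}})$ is induced by the reference and a purely reference-indexed term check would never see extra candidate terms. I would resolve it by making explicit that the term checkpoints in $\mathcal{Q}(\mathcal{P}_{\mathrm{ref}})$ include an extra-term check, consistent with the paper's ``missing/extra terms'' violation reports. This check is a per-variable or per-equation checkpoint that fails whenever $\mathcal{T}_{\mathrm{extra}}$ restricted to that aligned variable is nonempty. A small additional argument handles extra terms on a candidate variable that has no reference counterpart: the 1:1 signature-based variable alignment would then fail, and I would treat that alignment failure as a failing term checkpoint. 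With this convention every extra-term checkpoint passes, so $\sigma$ is onto $\mathcal{T}_{\mathrm{cand}}$ and is therefore a bijection. If the implementation did not score extra terms, (a) would weaken to an injection, so I would state this convention as part of the definition of $\mathcal{Q}$ before the proof.
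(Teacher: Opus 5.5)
Your proposal is correct and follows essentially the same route as the paper. Because every $w_j>0$, $\mathrm{IFS}=1$ forces every checkpoint in $\mathcal{Q}(\mathcal{P}_{\mathrm{ref}})$ to pass, and (a)--(e) are then read off family by family. The paper's proof simply asserts that zero term failures means ``no missing and no extra terms, yielding a bijection.'' Your injectivity argument via the $(v,o)$ keying of $\mathcal{T}$, and your stipulation that extra-term checks belong to $\mathcal{Q}(\mathcal{P}_{\mathrm{ref}})$, make explicit the conventions that assertion already relies on.
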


\begin{proof}
IFS $= 1.0$ implies zero failed weight across all dimensions. For terms, zero failures means no missing and no extra terms, yielding a bijection over aligned variables and normalized operator types. For BCs, ICs, time scheme, and coefficients, zero failures means every reference-induced checkpoint predicate passes, including coefficient matches under $\approx_{\delta}$.
\end{proof}

\begin{theorem}[Representational completeness]
\label{thm:representational_completeness}
Let $\mathcal{P}_{\mathrm{ref}}$ and $\mathcal{P}_{\mathrm{cand}}$ be two aligned PDE representations. If they differ in any represented dimension covered by the checkpoint set $\mathcal{Q}(\mathcal{P}_{\mathrm{ref}})$---i.e., there exists a reference term with no candidate term sharing the aligned variable and normalized PDE operator type, or an attribute (type, value, boundary) fails its prescribed reference-induced predicate---then at least one checkpoint fails and $\mathrm{IFS}(\mathcal{P}_{\mathrm{ref}}, \mathcal{P}_{\mathrm{cand}}) < 1$.
\end{theorem}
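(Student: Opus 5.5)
The argument is a direct contrapositive-style unpacking of the definition of $\Delta_{\mathrm{phys}}$ in \eqref{eq:physics_discrepancy}. The plan has three steps: first, exhibit a failing checkpoint from the hypothesized difference; second, show its weight makes the numerator strictly positive; third, conclude $\mathrm{IFS}<1$ via \eqref{eq:ifs}.

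\textbf{Step one: a failing checkpoint exists.} Split on the two disjuncts of the hypothesis.

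\emph{Term case.} Suppose some $t_{\mathrm{ref}}=(v,o,\alpha)\in\mathcal{T}_{\mathrm{ref}}$ has no candidate term with the same aligned variable and operator type. Then by \Cref{def:term_matching}, $\operatorname{match}(t_{\mathrm{ref}})=\bot$. The key observation needed is that $\mathcal{Q}(\mathcal{P}_{\mathrm{ref}})$ contains one term checkpoint per reference term. Its failure predicate is exactly ``match returns $\bot$.'' So that checkpoint fails.

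\emph{Attribute case.} Suppose some BC, IC, coefficient, or time-scheme attribute fails its reference-induced predicate. By hypothesis the attribute is covered by $\mathcal{Q}(\mathcal{P}_{\mathrm{ref}})$. So there is a $j$ with $\mathbf{1}[\mathrm{fail}_j]=1$ directly.

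\textbf{Step two: the numerator is strictly positive.} Every weight satisfies $w_j>0$, so the failing checkpoint contributes $w_j>0$ to the numerator of \eqref{eq:physics_discrepancy}. All other summands are nonnegative, hence the numerator is strictly positive. The denominator is a finite sum of positive weights over a nonempty set (it contains $j$), so it is finite and positive. Therefore $\Delta_{\mathrm{phys}}>0$.

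\textbf{Step three: conclude.} By \eqref{eq:ifs}, $\mathrm{IFS}=1-\Delta_{\mathrm{phys}}<1$.

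\textbf{Main obstacle.} The only delicate point is not algebraic but definitional: certifying that the checkpoint set really covers every difference the hypothesis names. There are two aspects to this.

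\emph{Term checkpoints.} One must confirm that each reference term induces a checkpoint whose failure condition coincides with the absence of a matching candidate term. This should be taken from the construction of $\mathcal{Q}(\mathcal{P}_{\mathrm{ref}})$, the counterpart of the reasoning used in \Cref{thm:ifs_soundness}.

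\emph{Alignment.} One must check that variable alignment cannot mask a missing term. Since the statement assumes the representations are already aligned, the comparison is done under the fixed alignment, and no re-alignment step can turn a failure into a pass.

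I would also note explicitly that extra candidate terms are not required here. The hypothesis only concerns reference-side differences, so the argument never needs the extra-term checkpoints.
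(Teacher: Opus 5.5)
Your proposal is correct and follows essentially the same argument as the paper. A differing reference item induces a checkpoint with $w_j>0$ whose failure indicator equals $1$, so the numerator of $\Delta_{\mathrm{phys}}$ is strictly positive and $\mathrm{IFS}<1$; like the paper, you rest the key step on the construction of $\mathcal{Q}(\mathcal{P}_{\mathrm{ref}})$ guaranteeing that every represented reference item induces such a checkpoint, and your term/attribute case split and denominator check are harmless elaborations of that proof.
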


\begin{proof}
By construction, every represented reference item induces at least one checkpoint with positive weight $w_j > 0$. If the candidate differs on that item, the corresponding checkpoint predicate evaluates to failure ($\mathbf{1}[\text{fail}_j] = 1$). The IFS numerator therefore contains at least one positive term, making the penalty nonzero and $\mathrm{IFS} < 1$.
\end{proof}

\begin{corollary}
\label{cor:representation_relative_guarantee}
Combining \Cref{thm:reconstruction_soundness,thm:ifs_soundness,thm:representational_completeness}: within the covered MOOSE semantic fragment, reported violations correspond to reconstructed PDE mismatches, and all represented mismatches are detected as failed checkpoints.
\end{corollary}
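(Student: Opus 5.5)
The corollary is an assembly of the three theorems, so I would prove it by establishing its two directions separately and reading each off one of the earlier results. First I would fix what ``within the covered MOOSE semantic fragment'' means. The candidate (and the reference, when it comes from $\mathcal{P}_{\mathrm{gt}}$) is obtained by applying Algorithm~\ref{alg:reconstruct} with a correct mapping $\mathcal{M}$. Every kernel not in $\mathcal{M}$ is placed in $\mathcal{U}$ and excluded. Variables are aligned by the signature procedure. With this in place, \Cref{thm:reconstruction_soundness} says the term set $\mathcal{T}$ of each contract is exactly one bound term per covered active kernel, with the operator type prescribed by $\mathcal{M}$. Consequently, a statement about a mismatch between two contracts is the same as a statement about a mismatch between the PDEs encoded by the two inputs, restricted to the covered fragment. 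This identification is the bridge that every later step uses.

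\emph{Direction 1 (reported violations are genuine mismatches).} I would argue checkpoint by checkpoint. A failed checkpoint $j\in\mathcal{Q}(\mathcal{P}_{\mathrm{ref}})$ is, by definition, a reference-induced predicate that evaluates false. For a term checkpoint, this means $\operatorname{match}(t_{\mathrm{ref}})=\bot$ (\Cref{def:term_matching}), or that the term is extra, or that $\approx_\delta$ fails on its coefficient. For a BC, IC, or time checkpoint, it means the type, boundary, or value predicate fails. In every case the failure is witnessed by a concrete difference in the reconstructed tuples. By the fidelity and no-spurious-terms clauses of \Cref{thm:reconstruction_soundness}, each tuple traces back to an actual object in the input, so the reported violation corresponds to a reconstructed PDE mismatch. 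The contrapositive of \Cref{thm:ifs_soundness} supplies the aggregate form of the same fact: when no mismatch of types (a)--(e) exists, no failure weight can appear, so IFS $=1$.

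\emph{Direction 2 (all represented mismatches are detected).} This direction follows directly from \Cref{thm:representational_completeness}. Take any represented discrepancy: a missing term, an extra term, or an attribute that fails its predicate. The reference item involved induces a checkpoint with weight $w_j>0$, and that checkpoint fails. Hence IFS $<1$, and the violation report lists it. To connect this back to the inputs, I would again invoke \Cref{thm:reconstruction_soundness}(a): any PDE-level difference between covered kernels appears as a difference between the term sets, so no covered mismatch escapes representation.

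The main obstacle is extra candidate terms. \Cref{thm:representational_completeness} is phrased in terms of reference-induced checkpoints, but an extra term in $\mathcal{T}_{\mathrm{cand}}$ is not naturally induced by the reference. I would handle this by reading $\mathcal{Q}(\mathcal{P}_{\mathrm{ref}})$ as including the term-bijection checkpoint that \Cref{thm:ifs_soundness}(a) already presupposes, so that $\mathcal{T}_{\mathrm{extra}}\neq\emptyset$ causes a failure. A second subtlety is coefficient matching. Because $\approx_\delta$ is reference-relative with a tolerance, the phrase ``mismatch'' has to be interpreted up to $\delta_{\mathrm{coef}}$ (\Cref{rem:reference_relative_matching}); the corollary holds at that granularity, not for exact equality of coefficients.
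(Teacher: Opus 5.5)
Your overall assembly matches what the paper's one-line ``combining'' amounts to: the ``violations are mismatches'' direction comes from the checkpoint predicates together with \Cref{thm:reconstruction_soundness}, and the detection direction comes from \Cref{thm:representational_completeness}. Two steps, however, are wrong.

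First, you have \Cref{thm:ifs_soundness} backwards. It states $\mathrm{IFS}=1\Rightarrow$ (a)--(e). Its contrapositive is ``if any of (a)--(e) fails, then $\mathrm{IFS}<1$,'' which is a \emph{detection} statement. What you wrote (no mismatch $\Rightarrow \mathrm{IFS}=1$) is the converse, and the theorem does not supply it. Direction~1 does not need it anyway, because your checkpoint-by-checkpoint argument already suffices.

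Used the right way round, the theorem removes your ``main obstacle'' without any reinterpretation of $\mathcal{Q}(\mathcal{P}_{\mathrm{ref}})$. If $\mathcal{T}_{\mathrm{extra}}\neq\emptyset$, no bijection as in (a) exists, so $\mathrm{IFS}<1$. Since every $w_j>0$, some checkpoint must then fail. This is the natural role of \Cref{thm:ifs_soundness} in the combination. It complements \Cref{thm:representational_completeness}, whose hypothesis names only missing reference terms and failed attribute predicates.

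Second, two of your claims are false: the bridge in your setup paragraph (a contract mismatch ``is the same as'' a mismatch of the PDEs encoded by the inputs), and the Direction~2 sentence that ``no covered mismatch escapes representation.'' \Cref{thm:reconstruction_soundness}(a) does not give them. It guarantees one bound term per covered kernel carrying the operator type prescribed by $\mathcal{M}$, not that this term captures the kernel's full PDE content.

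The comparison key $(v,o)$ is coarse. For example, \texttt{BodyForce} and \texttt{CoupledForce} both normalize to \texttt{source}, and many unrelated kernels normalize to \texttt{other}. Coefficient magnitudes hidden in material chains need not appear in $\alpha$ at all. The paper explicitly documents covered-kernel differences that IFS misses: coefficient/material-only errors at mean IFS $0.94$ in \Cref{tab:error_categories}, and coefficient perturbations near $\mathrm{IFS}\approx 1$ in \Cref{fig:ifs_mcs_validation}a. That is why the corollary speaks only of \emph{reconstructed} and \emph{represented} mismatches, and why MCS exists.

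Delete that step and keep both directions at the level of the reconstructed contracts. With that change and the corrected use of \Cref{thm:ifs_soundness}, your argument goes through.
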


\begin{remark}[Severity weight design]
\label{rem:severity_weight_design}
The six-level severity assignment (highest: 4.0 down to low: 0.5) encodes a domain-expert prior over typical physical impact: errors changing PDE \emph{type} receive the largest weights; errors changing coefficient \emph{values} receive lower structural-fidelity weights. The numerical values are engineering choices rather than universal physical constants; their relative ordering reflects typical impact severity in multiphysics practice. The severity of each checkpoint is inherited from the normalized operator type and checkpoint dimension in the domain-expert-annotated mapping table (Appendix~\ref{app:kernel_table}).
\end{remark}

\section{Reproducibility Manifest}
\label{app:repro}

This manifest summarizes the parameters needed to reproduce all experiments. Provider snapshot IDs, API access dates, container hashes, result JSONLs, and a SHA-256 manifest will be recorded in the anonymous supplementary material.

\begin{table}[htbp]
\centering
\caption{Key reproducibility parameters for MooseBench experiments. Provider snapshot IDs, access dates, result JSONLs, SHA-256 manifests, and container hashes will be recorded in the anonymous supplementary manifest.}
\label{tab:repro-full}
\small
\begin{tabular}{@{}ll@{}}
\toprule
\textbf{Item} & \textbf{Value} \\
\midrule
Benchmark & MooseBench canonical 220-case clean set \\
Benchmark exclusions & 40 excluded cases, fixed exclusion list \\
MOOSE execution app & local \texttt{combined-opt}, Application Version 541a9f6f \\
Standard non-registry methods & Direct, SpecGen, PDE-Refine \\
Deployment-audit methods & Exec-Repair+Reg, PDE-Reg \\
Official-doc diagnostics & DocsDirect, Docs+ExecRepair, Docs+PDE-Reg \\
RAG corpus & official MOOSE object docs only; no benchmark answers or execution feedback \\
LLM temperature & 0 \\
Maximum refinement iterations & $N_{\max}=2$ \\
IFS threshold & $\tau_{\mathrm{IFS}}=0.85$ \\
Coefficient relative tolerance & $\delta_{\mathrm{coef}}=0.1$ \\
MCS applicability & reported when comparable coefficient/material facts exist \\
Parse failure handling & parse failures counted as $\mathrm{IFS}=0$ \\
Exec definition & InitExec2: exit 0, solve start before first error, or no error within 2s \\
Main statistic & paired mean IFS difference + Wilcoxon signed-rank test \\
Execution audit statistic & Exec, GoodExec, FalseExec over all 220 cases \\
Release artifact & benchmark, mapping table, code, result JSONLs and SHA-256 manifest \\
\bottomrule
\end{tabular}
\end{table}

\subsection{Object-Realization Registry}
\label{app:registry-note}

The object-realization registry is a deterministic engineering layer that translates generated kernel/BC blocks into MOOSE-instantiable objects: parameter resolution, material wiring, default Mesh / Executioner / Outputs fill-ins, and minimal type repair when names mismatch the framework. The registry does not modify IFS or MCS computation, does not provide PDE-level feedback, and is held frozen across \textbf{Exec-Repair+Reg} and \textbf{PDE-Reg} in the deployment audit (\Cref{tab:exec_audit}). We treat it as standard engineering infrastructure for experimental reproducibility rather than as a methodological contribution.

\section{Statistical Details}
\label{app:stats}

\paragraph{Wilcoxon signed-rank test.}
For each pair of methods, we compute the Wilcoxon signed-rank statistic on per-prompt IFS differences within the same experiment sweep. The paper contains standard semantic sweeps and registry execution-audit sweeps, so comparisons are not made across incompatible method families unless they share the same generated-output setting.

\begin{table}[htbp]
\centering
\caption{Paired statistical comparisons on 220-case MooseBench sweeps. $\Delta$ is the mean paired IFS difference across all 220 cases; Wilcoxon $W$ and $p$ use the standard signed-rank test with zero differences omitted.}
\label{tab:stats}
\small
\begin{tabular}{@{}llrcrr@{}}
\toprule
\textbf{Sweep} & \textbf{Comparison} & $n$ & $\boldsymbol{\Delta}$ & $W$ & $p$ \\
\midrule
Claude Sonnet 4.6 & SpecGen vs. Direct & 220 & $+0.056$ & 7000.5 & $0.037$ \\
Claude Sonnet 4.6 & PDE-Refine vs. Direct & 220 & $+0.072$ & 6553.5 & $0.002$ \\
GPT-5.4 & SpecGen vs. Direct & 220 & $+0.014$ & 6225 & $0.167$ \\
GPT-5.4 & PDE-Refine vs. Direct & 220 & $+0.046$ & 4911.5 & $0.003$ \\
DeepSeek V4 Flash & SpecGen vs. Direct & 220 & $+0.069$ & 5679 & $0.004$ \\
DeepSeek V4 Flash & PDE-Refine vs. Direct & 220 & $+0.183$ & 4170 & $6.4 \times 10^{-11}$ \\
Claude Sonnet 4.6 registry & PDE-Reg vs. Exec-Repair+Reg & 220 & $+0.101$ & 3637.5 & $2.6 \times 10^{-8}$ \\
GPT-5.4 registry & PDE-Reg vs. Exec-Repair+Reg & 220 & $+0.082$ & 3416.5 & $1.6 \times 10^{-7}$ \\
GPT-4.1-mini registry & PDE-Reg vs. Exec-Repair+Reg & 220 & $+0.357$ & 2595.0 & $1.3 \times 10^{-20}$ \\
DeepSeek V4 Flash registry & PDE-Reg vs. Exec-Repair+Reg & 220 & $+0.198$ & 2416.5 & $2.1 \times 10^{-15}$ \\
\bottomrule
\end{tabular}
\end{table}

\begin{table}[htbp]
\centering
\caption{Conditional pipeline effectiveness on hard cases ($\mathrm{IFS}_{\mathrm{Direct}}<0.7$). The method is PDE-Refine for standard non-registry sweeps and PDE-Reg for registry sweeps.}
\label{tab:hard_conditional}
\small
\begin{tabular}{@{}lrrrr@{}}
\toprule
\textbf{LLM} & $n$ & $\boldsymbol{\Delta}$ IFS & $W$ & $p$ \\
\midrule
Claude Sonnet 4.6 & 84 & $+0.273$ & 188 & $1.0 \times 10^{-11}$ \\
GPT-5.4 & 67 & $+0.219$ & 227 & $3.3 \times 10^{-8}$ \\
GPT-4.1-mini & 197 & $+0.406$ & 1494.0 & $7.3 \times 10^{-23}$ \\
DeepSeek V4 Flash & 108 & $+0.414$ & 178 & $1.1 \times 10^{-15}$ \\
\bottomrule
\end{tabular}
\end{table}

\paragraph{Threshold sensitivity.} The hard-case effect remains positive across Direct-IFS thresholds:

\begin{table}[htbp]
\centering
\caption{Threshold sensitivity for PDE-grounded improvement on cases where $\mathrm{IFS}_{\mathrm{Direct}}<\theta$.}
\label{tab:threshold_sensitivity}
\small
\begin{tabular}{@{}lcrcc@{}}
\toprule
\textbf{LLM} & $\theta$ & $n$ & $\Delta$ IFS & $p$ \\
\midrule
Claude Sonnet 4.6 & 0.6 & 61 & $+0.341$ & $1.0 \times 10^{-9}$ \\
Claude Sonnet 4.6 & 0.7 & 84 & $+0.273$ & $1.0 \times 10^{-11}$ \\
Claude Sonnet 4.6 & 0.8 & 109 & $+0.212$ & $1.6 \times 10^{-11}$ \\
GPT-5.4 & 0.6 & 44 & $+0.284$ & $2.4 \times 10^{-7}$ \\
GPT-5.4 & 0.7 & 67 & $+0.219$ & $3.3 \times 10^{-8}$ \\
GPT-5.4 & 0.8 & 97 & $+0.170$ & $7.4 \times 10^{-10}$ \\
GPT-4.1-mini & 0.6 & 172 & $+0.438$ & $7.2 \times 10^{-22}$ \\
GPT-4.1-mini & 0.7 & 197 & $+0.406$ & $7.3 \times 10^{-23}$ \\
GPT-4.1-mini & 0.8 & 202 & $+0.367$ & $1.3 \times 10^{-19}$ \\
DeepSeek V4 Flash & 0.6 & 86 & $+0.505$ & $1.8 \times 10^{-14}$ \\
DeepSeek V4 Flash & 0.7 & 108 & $+0.414$ & $1.1 \times 10^{-15}$ \\
DeepSeek V4 Flash & 0.8 & 135 & $+0.334$ & $1.3 \times 10^{-16}$ \\
\bottomrule
\end{tabular}
\end{table}

\paragraph{Bootstrap confidence intervals.}
For descriptive means and table values, confidence intervals can be reproduced by bootstrap resampling over case IDs. We do not pool standard non-registry and registry runs for paired inference because their generation prompts and object-realization controls differ.

\paragraph{Execution-audit proportions.}
Registry audit claims are additionally evaluated with GoodExec and FalseExec, which are Bernoulli case-level proportions over the same 220 prompts.

\section{Motivating Examples: Detailed Descriptions}
\label{app:examples}

The four additional failure modes below illustrate the same silent-error pattern as \Cref{fig:simulation_error}. All examples are from actual MOOSE simulations that converge without error.

\paragraph{Example 1: Missing TimeDerivative (transient $\to$ steady-state).}
An LLM asked to model ``transient heat conduction in a steel rod'' generates the \texttt{HeatConduction} kernel but omits \texttt{TimeDerivative}. The code runs without error---but it solves Laplace's equation $\nabla^2 T = 0$ instead of the heat equation $\rho c_p \frac{\partial T}{\partial t} - \nabla \cdot (k \nabla T) = 0$. The simulation converges in one time step; the entire time-dependent behavior is absent.

\paragraph{Example 2: Wrong boundary condition type.}
A prompt specifies ``convective cooling on the right side'' ($q = h(T - T_\infty)$ with $h = 2$\,W/m$^2$K, $T_\infty = 20$\,K). The LLM instead generates a \texttt{DirichletBC} fixing $T = 20$\,K. The code executes, but the boundary condition is physically wrong---forced temperature instead of convective heat transfer, producing an ${\approx}80$\,K boundary-region error in the tested setup.

\paragraph{Example 3: Wrong coefficient value.}
The prompt specifies thermal conductivity $k = 3~\text{W/m}\cdot\text{K}$. The LLM generates $k = 0.3$, an order-of-magnitude error. The simulation runs and produces plausible-looking contour plots, but heat diffuses ten times more slowly than intended.

\paragraph{Example 4: Missing source term (3D).}
A 3D simulation requires a volumetric heat source (\texttt{BodyForce} kernel) representing internal heat generation with a Gaussian spatial profile (peak 80,000\,W/m$^3$). The LLM omits this kernel entirely, solving Laplace's equation $\nabla^2 T = 0$ instead of the Poisson equation $-\nabla^2 T = f(\mathbf{x})$. The entire internal-heating response vanishes despite successful execution.

\section{Prompt Templates}
\label{app:prompts}

The following templates are used verbatim in the pipeline. All LLMs receive identical prompts; only the natural-language description $d$ varies per case.

\noindent\textbf{PDE extraction prompt (Stage 1).}\par\vspace{0.25em}
\begin{lstlisting}[basicstyle=\ttfamily\tiny, breaklines=true, frame=single, backgroundcolor=\color{codebg}, columns=fullflexible, keepspaces=true]
You are a MOOSE simulation physics expert. Given a natural language description
of a simulation problem, extract the physics contract as a JSON object.

Return ONLY valid JSON with this structure:
{ "variables": ["var1"],
  "terms": [
    {"variable": "var1", "operator": "diffusion", "coefficient": 45.0},
    {"variable": "var1", "operator": "time_derivative", "coefficient": null}
  ],
  "boundary_conditions": [
    {"variable":"var1","boundary":"left","bc_type":"Dirichlet","value":300.0}
  ],
  "initial_conditions": [{"variable":"var1","value":300.0,"ic_type":"constant"}],
  "time_scheme": "transient", "dimensions": 2 }

Valid operators: diffusion, time_derivative, source, reaction, advection,
  stress_divergence, coupled_force, curl_curl, inertia, pf_darcy_flux,
  pf_effective_stress, allen_cahn, cahn_hilliard, ns_continuity,
  ns_pressure, ns_viscous
Valid bc_type: Dirichlet, Neumann, Robin
\end{lstlisting}

\noindent\textbf{SpecGen prompt (Stage 2).}\par\vspace{0.25em}
\begin{lstlisting}[basicstyle=\ttfamily\tiny, breaklines=true, frame=single, backgroundcolor=\color{codebg}, columns=fullflexible]
You are a MOOSE simulation expert. Given a structured physics contract,
generate a complete MOOSE input file (.i format) that implements exactly
this physics.

CRITICAL: the coefficient values listed in the specification are EXACT.
You MUST use them verbatim in [Materials]. Do NOT substitute 1.0 or any
default value for a coefficient that is explicitly given.
\end{lstlisting}

\noindent\textbf{PDE-Refine prompt (Stage 4).}\par\vspace{0.25em}
\begin{lstlisting}[basicstyle=\ttfamily\tiny, breaklines=true, frame=single, backgroundcolor=\color{codebg}, columns=fullflexible]
The MOOSE input file you generated has specific physics-level errors
detected by automated PDE verification.

Current code: <code>
Violations detected: <structured violation report>

CRITICAL: Make ONLY minimal changes to fix the listed violations.
Do NOT modify kernels/BCs not mentioned in violations.
If "missing term": ADD the missing kernel without removing existing ones.
If "extra term": REMOVE only that specific kernel.
For coefficient violations: set the EXACT numerical value.
\end{lstlisting}

\noindent\textbf{Exec-Repair prompt.}\par\vspace{0.25em}
\begin{lstlisting}[basicstyle=\ttfamily\tiny, breaklines=true, frame=single, backgroundcolor=\color{codebg}, columns=fullflexible]
The MOOSE input file you generated may have errors. Please review
and fix any problems. Return the corrected MOOSE input file.
\end{lstlisting}
\noindent The Exec-Repair prompt is deliberately generic---no physics-specific guidance is provided, mirroring standard self-debugging practice.

\section{Case Study: Pipeline Walkthrough}
\label{app:case_study}

We trace the full pipeline on \texttt{framework\_009}, a medium-complexity coupled diffusion-reaction problem (IFS: Direct$=$0.62 $\to$ PDE-Refine$=$0.98).

\paragraph{Natural-language prompt.}
``In a 1D domain (1\,m), variable $v$ (initially 10 everywhere) diffuses and decays at rate 0.5/s, with left boundary at 10 and right at 0. Variable $u$ (initially zero) diffuses and is produced proportionally to the local value of $v$ (coefficient 0.5). Left boundary of $u$ is zero, right boundary has zero flux. Simulate 3\,s with $dt = 0.05$\,s.''

\paragraph{Stage 1: PDE extraction.}
The LLM correctly identifies two coupled variables with: (1)~$v$: diffusion + time derivative + reaction (decay $-0.5v$), Dirichlet BCs; (2)~$u$: diffusion + time derivative + coupled source ($+0.5v$), mixed BCs (Dirichlet left, Neumann right).

\paragraph{Direct failure.}
Direct generation ($\mathrm{IFS}=0.62$) omits the \texttt{CoupledForce} kernel linking $u$ to $v$ and uses a \texttt{DirichletBC} instead of \texttt{NeumannBC} on the right boundary of $u$. The code executes without error, but the two variables evolve independently---the intended coupling is entirely absent.

\paragraph{PDE-Refine correction.}
The IFS violation report identifies: (1)~missing \texttt{coupled\_force} term on variable $u$; (2)~wrong BC type on $u$/right (Dirichlet vs.\ Neumann). The LLM adds the missing \texttt{CoupledForce} kernel and changes the BC type. After one refinement iteration, IFS reaches 0.98 (the residual 0.02 deficit is a minor coefficient relative-tolerance issue).

\section{Execution-Based Refinement Failure Analysis}
\label{app:ae_failure}

We document the characteristic failure mode of execution-only repair. In the registry audit, Exec-Repair+Reg raises Exec for GPT-5.4 (82.3\%) and DeepSeek V4 Flash (74.1\%) but leaves large executable-but-low-IFS fractions (40.0\% and 39.1\%), whereas PDE-Reg achieves higher mean IFS and higher GoodExec.

\paragraph{Why PDE-Reg can also improve executability.}
PDE-Reg routes generation through an extracted PDE contract and IFS-guided refinement, so generated files tend to preserve more consistent variable, kernel, BC, and material references. Under this structural consistency, the registry can resolve remaining mechanical gaps such as parameter wiring, material-property closure, and safe defaults. Execution-only repair starts from Direct outputs with more accumulated inconsistencies; MOOSE error logs usually surface the first encountered failure, so repairing the reported issue may leave later object, variable, or material mismatches unresolved. This explains why PDE-Reg can improve both GoodExec and Exec while still being a PDE-feedback method rather than an execution-optimization method, consistent with the weaker-model stress tests in Table~\ref{tab:weak_model_registry}.

\paragraph{Deletion strategy.}
When the LLM encounters a kernel referencing an undefined variable or material property, its typical repair strategy under generic self-debugging is \emph{deletion}---removing the problematic block to eliminate the error. This resolves the runtime error but can destroy physically necessary kernels.

\paragraph{Example: \texttt{mechanics\_004} (Direct$=$0.61 $\to$ Exec-Repair$=$0.00).}
Direct generation produces a partially correct elasticity input with \texttt{StressDivergenceTensors} kernels but references an undefined elasticity tensor. The Exec-Repair prompt (``review and fix any problems'') leads the LLM to remove the stress kernels entirely, producing a degenerate file that parses but encodes no physics. IFS drops from 0.61 to 0.00.

\paragraph{Example: \texttt{coupled\_014} (Direct$=$0.90 $\to$ Exec-Repair$=$0.41).}
A high-quality direct generation with a minor material property mismatch. Exec-Repair ``fixes'' the mismatch by restructuring the entire file, inadvertently removing two of three boundary conditions and changing the time scheme. IFS drops by $-$0.49.

These examples illustrate why execution-based feedback is insufficient: the LLM optimizes for ``code that runs'' rather than ``code that solves the intended PDE.''

\section{Execution-Optimized Direct Generation}
\label{app:exec_pressure}

We include a diagnostic variant that explicitly optimizes Direct generation for MOOSE acceptance rather than PDE fidelity. The model is asked to produce a minimal executable surrogate when the full multiphysics realization is difficult. This is not a proposed method; it isolates the failure mode induced by an execution-only objective.

\begin{table}[h]
\centering
\caption{Execution-pressure diagnostic on a 20-case DeepSeek V4 Flash coupled slice. Execution is measured by a short MOOSE acceptance audit. Exec-Optimized Direct substantially improves executability, but creates executable semantic regressions; PDE-Reg shows that object realization support can improve executability without abandoning the PDE-level contract.}
\label{tab:exec_pressure_diag}
\small
\begin{tabular}{@{}lccc@{}}
\toprule
\textbf{Method} & \textbf{Mean IFS} & \textbf{Exec} & \textbf{Executable IFS drops $>0.1$} \\
\midrule
Direct & 0.711 & 4/20 & -- \\
Exec-Optimized Direct & 0.722 & 16/20 & 4 \\
PDE-Reg & 0.829 & 17/20 & -- \\
\bottomrule
\end{tabular}
\end{table}

\begin{table}[h]
\centering
\caption{Qualitative example from \texttt{coupled\_012}. The execution-optimized output is accepted by MOOSE but solves a simplified surrogate rather than the intended coupled carburization problem.}
\label{tab:exec_pressure_case}
\small
\begin{tabular}{@{}lll@{}}
\toprule
\textbf{Aspect} & \textbf{Intended prompt} & \textbf{Exec-Optimized output} \\
\midrule
Domain & 2D steel component & 1D line \\
Variables & temperature + carbon & temperature only \\
Terms & heat time/diffusion + carbon time/diffusion & time + scalar diffusion \\
Boundary conditions & temperature/carbon Dirichlet + oil convection & temperature Dirichlet left/right \\
IFS / execution & Direct $\mathrm{IFS}=0.894$ & $\mathrm{IFS}=0.415$, execution pass \\
\bottomrule
\end{tabular}
\end{table}

The example clarifies why execution should be treated as a mechanical signal rather than a semantic objective. The generated file is easier for MOOSE to accept because it removes the carbon field, carbon diffusion, carbon boundary conditions, the two-dimensional domain, and the convective heat-flux boundary. IFS detects this drift because the reconstructed PDE no longer contains the intended terms and boundary conditions.

\section{MooseBench Benchmark Details}
\label{app:moosebench}

MooseBench contains 220 curated prompts (260 total, 40 excluded) spanning 7 expert-defined physics families and 3 complexity tiers:
\begin{itemize}
    \item \textbf{Simple} ($\leq 4$ checkpoints): Single-physics, minimal configuration (e.g., steady-state diffusion with two BCs).
    \item \textbf{Medium} (5--8 checkpoints): Single-physics with richer configuration or simple coupling.
    \item \textbf{Complex} ($\geq 9$ checkpoints): Coupled multiphysics, multiple variables, or sophisticated boundary conditions.
\end{itemize}

\paragraph{Family balance.}
The heat-transfer families are larger because they supply many stable single- and multi-physics MOOSE examples with clean PDE-level ground truth and executable source files, not because they are selected to favor the proposed method. We therefore report per-family results in Appendix~\ref{app:extra_figures} rather than relying only on aggregate means. The largest gains are not concentrated in the heat subsets; coupled, elasticity, plasticity, THM, and phase-field families provide the harder tail where PDE-grounded feedback has the most headroom.

\begin{table}[htbp]
\centering
\caption{Canonical 220-case MooseBench family distribution, referenced from Section~\ref{sec:benchmark}. Complexity-tier Direct-IFS breakdowns are reported in Appendix~\ref{app:extra_figures}.}
\label{tab:stratified}
\small
\begin{tabular}{@{}lrc@{}}
\toprule
\textbf{Physics Family} & \textbf{Count} & \textbf{\%} \\
\midrule
Transient Heat Conduction       & 66 & 30.0 \\
Steady-State Heat Conduction    & 36 & 16.4 \\
TH-Coupled Porous Media Flow    & 30 & 13.6 \\
Linear Elasticity               & 28 & 12.7 \\
THM-Coupled Porous Media Flow   & 20 &  9.1 \\
Plasticity                      & 20 &  9.1 \\
Phase-Field                     & 20 &  9.1 \\
\midrule
\textbf{Total}                  & \textbf{220} & \textbf{100} \\
\bottomrule
\end{tabular}
\end{table}

\paragraph{Prompt design.}
All prompts describe physics in domain-expert language without referencing MOOSE-specific constructs (kernel names, BC type names, variable naming conventions). Each prompt is well-posed: it explicitly specifies all boundary conditions, initial conditions, material properties (with numerical values), geometry, and time scheme.

Complex-tier prompts feature genuinely challenging physics: multi-variable coupling (up to 3 fields), specialized MOOSE kernels (\texttt{HeatConduction}, \texttt{MatDiffusion}, \texttt{AnisotropicDiffusion}, \texttt{CoupledForce}, \texttt{CoefReaction}), mixed boundary condition types (Dirichlet + Neumann + convective), and real-world scenarios (nuclear fuel pellet, cascade reaction A$\to$B$\to$C, thermal shock with hydrogen permeation). The hardest case (coupled\_007) involves 10 kernels, 3 variables, and 7 numerical coefficients.

\paragraph{Ground-truth construction.}
For each prompt, a reference MOOSE input file implements the intended physics. Ground truth $\mathcal{P}_{\mathrm{gt}}$ is derived deterministically via \texttt{reconstruct\_pde()}, ensuring consistency with the kernel--PDE mapping $\mathcal{M}$. Each GT includes an \texttt{acceptable\_kernel\_variants} field auto-populated from the \texttt{equivalent\_to} relations in the mapping table, so that kernel variants normalized to the same PDE operator type (e.g., \texttt{Diffusion} $\equiv$ \texttt{ADDiffusion} $\equiv$ \texttt{HeatConduction} for the diffusion operator) are all accepted.

All 220 cases pass self-validation: $\mathrm{IFS}(\mathcal{P}_{\mathrm{gt}}, \mathcal{P}_{\mathrm{gt}}) = 1.0$ with zero MOOSE-term leakage in NL descriptions.

\paragraph{Example prompts (one per tier).}

\emph{Simple (thermal\_001):} ``A thin membrane separates two regions with different concentrations of a dissolved substance. The bottom side is maintained at concentration 1, the top side at concentration 0. Assuming uniform diffusivity, find the steady-state concentration distribution through the membrane.'' \textbf{GT:} 1 term (diffusion), 2 Dirichlet BCs, steady. \textbf{Results:} Claude Direct$=$1.00, GPT Direct$=$0.33, both SpecGen$=$PDE-Refine$=$1.00.

\emph{Medium (diffusion\_004):} ``A substance with diffusivity 0.1 m$^2$/s spreads through a 2D square region (1 m $\times$ 1 m). The left boundary is held at zero concentration. A constant mass flux of 1 mol/(m$^2$s) enters from the right. The top and bottom are insulated. Starting from zero concentration everywhere, simulate the transient evolution for 2 seconds using dt $=$ 0.1 s.'' \textbf{GT:} 2 terms (diffusion + time\_derivative), Dirichlet + Neumann BCs, transient. \textbf{Results:} Claude Direct$=$0.92, GPT Direct$=$0.53; GPT SpecGen$=$0.80 (+51\%).

\emph{Complex (coupled\_007):} ``A three-step cascade reaction A $\to$ B $\to$ C occurs in a 1D tube (length 10 cm). Species A (D $=$ 10$^{-4}$ m$^2$/s) enters from the left at 5 mol/m$^3$ and degrades at rate 1 s$^{-1}$ to produce B. Species B (D $=$ 5$\times$10$^{-5}$ m$^2$/s) degrades at rate 0.5 s$^{-1}$ to produce C. Species C (D $=$ 2$\times$10$^{-5}$ m$^2$/s) is collected at the right (concentration 0). All other boundaries: zero flux. Simulate 1 second.'' \textbf{GT:} 10 terms, 6 BCs, 3 variables, transient. \textbf{Results:} Claude Direct$=$0.73, SpecGen$=$0.90; GPT Direct$=$0.22, SpecGen$=$0.26.

\paragraph{Full pipeline walkthrough (diffusion\_004, GPT-4.1-mini).}

\emph{Stage 1 (PDE extraction):} LLM reads the NL prompt and produces a structured physics contract: \{variable: $c$, operators: [diffusion ($D=0.1$), time\_derivative], BCs: [Dirichlet on left ($c=0$), Neumann on right ($\text{flux}=1$)], time: transient\}.

\emph{Stage 2 (SpecGen):} LLM generates a MOOSE input file guided by the spec, including \texttt{[Kernels]} with \texttt{Diffusion} and \texttt{TimeDerivative}, \texttt{[BCs]} with \texttt{DirichletBC} and \texttt{NeumannBC}, and \texttt{[Materials]} with diffusivity$=$0.1.

\emph{Stage 3 (PDE reconstruction):} \texttt{reconstruct\_pde()} parses the generated code into $\mathcal{P}_{\mathrm{code}}$ with 2 terms, 2 BCs, transient.

\emph{Stage 4 (IFS evaluation):} $\mathrm{IFS}(\mathcal{P}_{\mathrm{gt}}, \mathcal{P}_{\mathrm{code}}) = 0.80$. Two checkpoints fail: coefficient mismatch (diffusivity resolved as string, not numeric) and BC value tolerance. Without spec guidance, Direct scored 0.53 --- missing the Neumann BC entirely and using steady-state instead of transient.

\section{Additional Figures and Tables}
\label{app:extra_figures}

\begin{table}[htbp]
\centering
\caption{Severity-weight ablation on standard non-registry sweeps, referenced from \Cref{sec:exp:main}. Values are paired mean deltas for PDE-Refine against Direct with 95\% bootstrap CIs. Structural-uniform scoring removes expert severity weights over term, BC, IC, and time checkpoints; coefficient/material facts are evaluated separately by MCS.}
\label{tab:severity_ablation}
\small
\begin{tabular}{@{}llcc@{}}
\toprule
\textbf{LLM} & \textbf{Feedback Method} & \textbf{Severity-Weighted $\Delta$} & \textbf{Structural-Uniform $\Delta$} \\
\midrule
Claude Sonnet 4.6 & PDE-Refine & $+0.072$ [$+0.042$, $+0.104$] & $+0.058$ [$+0.041$, $+0.076$] \\
GPT-5.4 & PDE-Refine & $+0.046$ [$+0.019$, $+0.077$] & $+0.021$ [$+0.005$, $+0.037$] \\
DeepSeek V4 Flash & PDE-Refine & $+0.183$ [$+0.139$, $+0.228$] & $+0.107$ [$+0.076$, $+0.139$] \\
\bottomrule
\end{tabular}
\end{table}

\begin{table}[htbp]
\centering
\caption{Framework conditions for instantiating the PDE-grounded diagnostic pattern, referenced from \Cref{sec:exp:ufl}.}
\label{tab:adoption-roadmap}
\small
\begin{tabular}{@{}lccc@{}}
\toprule
\textbf{Framework} & \textbf{Compositional semantics?} & \textbf{Direct op map?} & \textbf{Shared contract?} \\
\midrule
MOOSE            & Yes & Yes & Yes \\
FEniCS / UFL     & Yes & Yes, via symbolic forms & Yes (proof-of-concept, \Cref{sec:exp:ufl}) \\
FreeFEM          & Yes & Yes, via \texttt{int2d} text & Yes (proof-of-concept, \Cref{sec:exp:ufl}) \\
FiPy             & Yes & Yes, via term constructors & Yes (proof-of-concept, \Cref{sec:exp:ufl}) \\
Devito           & Partial & Yes, via symbolic stencils & Yes (proof-of-concept, \Cref{sec:exp:ufl}) \\
Firedrake        & Yes & Yes, via UFL frontend & Yes (expected) \\
deal.II          & Partial & Requires assembly analysis & Partial \\
OpenFOAM         & No direct weak-form exposure & Indirect & Requires different target \\
\bottomrule
\end{tabular}
\end{table}

\begin{figure}[htbp]
\centering
\includegraphics[width=0.52\linewidth]{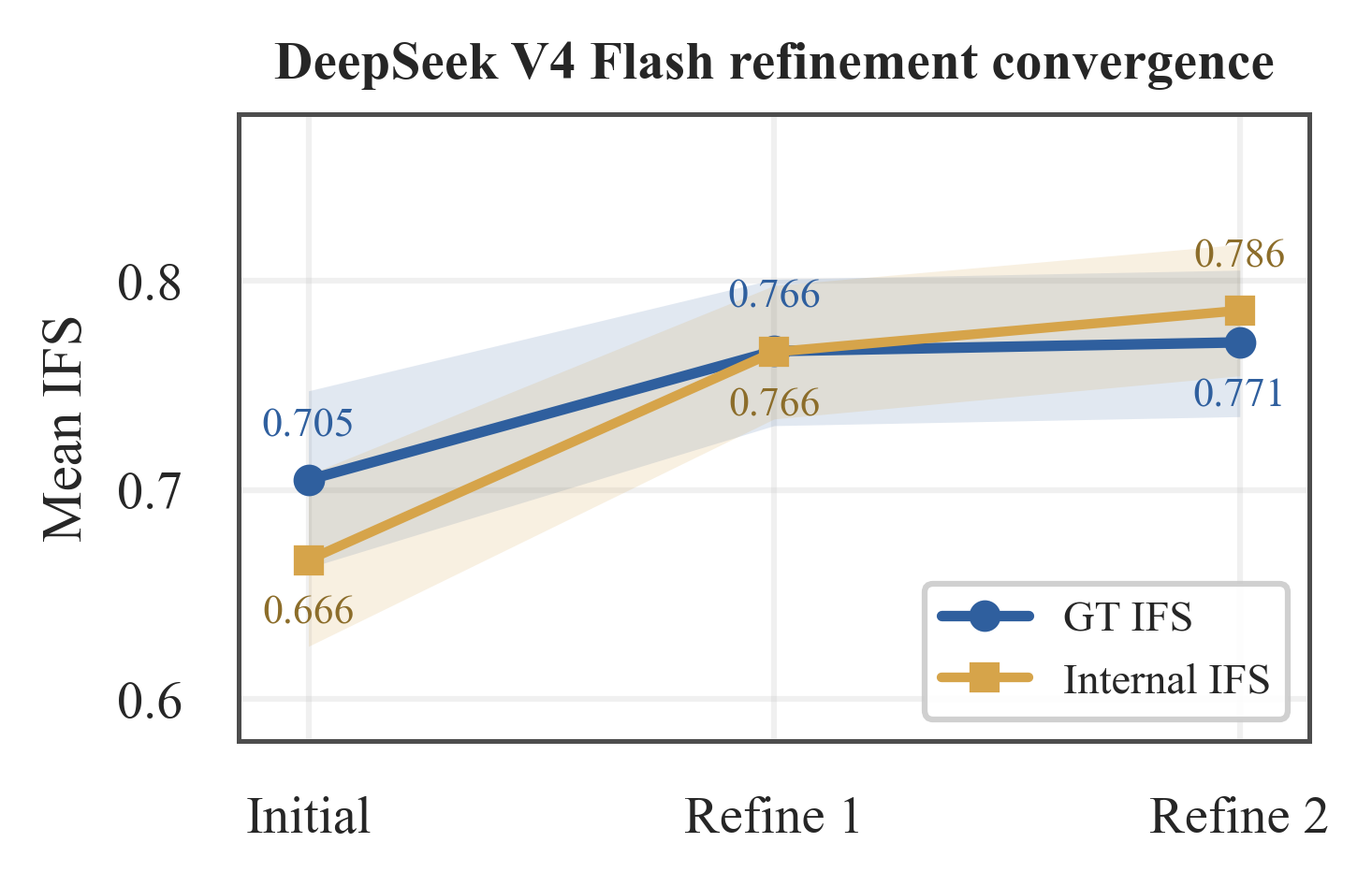}
\caption{Refinement convergence in an instrumented 220-case DeepSeek V4 Flash PDE-Refine diagnostic run, referenced from Section~\ref{sec:exp:setup}. Mean GT IFS rises from 0.715 initially to 0.782 after one refinement and 0.786 after two; internal IFS follows the same monotone trend. This diagnostic supports the $N_{\max}=2$ setting without replacing the main standard non-registry sweep in Table~\ref{tab:main}.}
\label{fig:refinement_convergence}
\end{figure}

\begin{table}[htbp]
\centering
\caption{Error taxonomy under Direct generation in 220-case sweeps, referenced from \Cref{sec:exp:errors}. Entries are the fraction of cases where the corresponding IFS sub-dimension is imperfect; each model column has $n{=}220$, so Wilson 95\% half-widths are at most about 7 percentage points.}
\label{tab:error_taxonomy}
\small
\begin{tabular}{@{}llcccc@{}}
\toprule
\textbf{Error Type} & \textbf{Dim.} & \textbf{Sonnet} & \textbf{GPT-5.4} & \textbf{GPT-mini} & \textbf{DeepSeek} \\
\midrule
Missing/extra kernel & Term  & 35\% & 25\% & 87\% & 40\% \\
Wrong coefficient    & Coeff & 35\% & 45\% & 40\% & 37\% \\
Wrong/missing BC     & BC    & 56\% & 60\% & 80\% & 66\% \\
Missing IC           & IC    &  1\% &  1\% &  1\% &  1\% \\
Time scheme mismatch & Time  & 10\% &  8\% & 13\% & 19\% \\
\bottomrule
\end{tabular}
\end{table}

\begin{table}[htbp]
\centering
\caption{BC error co-attribution under Direct generation across 220-case sweeps, referenced from \Cref{sec:exp:errors}. The cohort is all Direct outputs with imperfect BC alignment ($n{=}578$). Shares use Wilson 95\% CIs; mean IFS uses bootstrap 95\% CIs.}
\label{tab:bc_attribution}
\small
\begin{tabular}{@{}lcc@{}}
\toprule
\textbf{BC-Error Cohort} & \textbf{Share of BC Errors} & \textbf{Mean IFS} \\
\midrule
Isolated BC only & 19.0\% [16.0, 22.4] & 0.739 [0.706, 0.769] \\
BC + coefficient only & 22.8\% [19.6, 26.4] & 0.756 [0.734, 0.775] \\
BC + term/time/IC & 43.6\% [39.6, 47.7] & 0.253 [0.222, 0.286] \\
BC + structural + coefficient & 14.5\% [11.9, 17.6] & 0.440 [0.403, 0.476] \\
\bottomrule
\end{tabular}
\end{table}

\begin{figure}[htbp]
\centering
\includegraphics[width=0.9\linewidth]{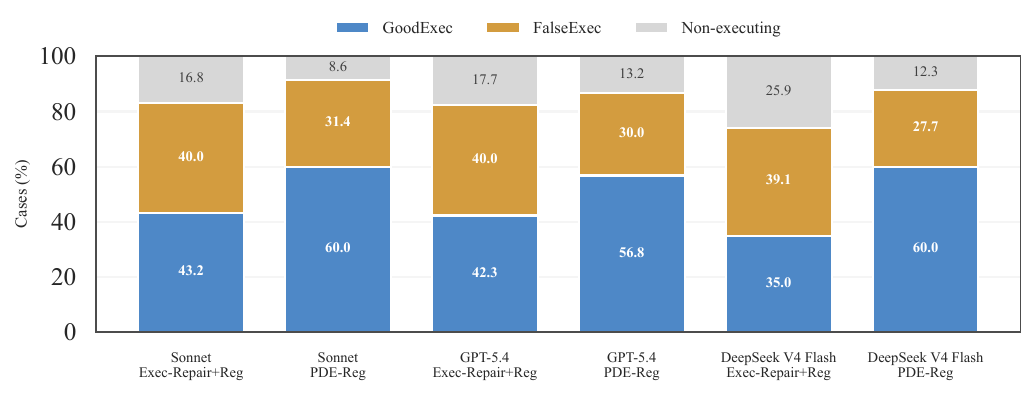}
\caption{Execution/fidelity quadrants under controlled object-realization infrastructure, referenced from \Cref{sec:exp:main}. Bars partition all 220 cases into GoodExec (Exec and $\mathrm{IFS}\geq0.85$), FalseExec (Exec but $\mathrm{IFS}<0.85$), and non-executing outputs. Execution-only repair leaves a large FalseExec region, while PDE-Reg shifts more cases into GoodExec by adding PDE-grounded feedback.}
\label{fig:exec_quadrants}
\end{figure}

\begin{table}[htbp]
\centering
\caption{Weak-model registry stress tests under the same frozen object-realization registry. Exec uses InitExec2; GoodExec and FalseExec are fractions of all 220 cases. These appendix rows test registry-control transfer to weaker or lower-cost models without changing the main deployment-audit model set.}
\label{tab:weak_model_registry}
\small
\begin{tabular}{@{}llcccc@{}}
\toprule
\textbf{LLM} & \textbf{Method} & \textbf{IFS} & \textbf{Exec} & \textbf{GoodExec} & \textbf{FalseExec} \\
\midrule
GPT-4.1-mini & Exec-Repair+Reg & 0.346 & 25.5\% & 3.6\% & 21.8\% \\
GPT-4.1-mini & PDE-Reg & \textbf{0.703} & \textbf{63.2\%} & \textbf{43.6\%} & 19.5\% \\
\midrule
Claude Haiku 4.5 & Exec-Repair+Reg & 0.735 & 74.1\% & 39.1\% & 35.0\% \\
Claude Haiku 4.5 & PDE-Reg & \textbf{0.811} & \textbf{80.0\%} & \textbf{48.2\%} & 31.8\% \\
\midrule
Gemini 3.1 Flash Lite & Exec-Repair+Reg & 0.671 & 51.8\% & 23.6\% & 28.2\% \\
Gemini 3.1 Flash Lite & PDE-Reg & \textbf{0.828} & \textbf{85.0\%} & \textbf{55.9\%} & 29.1\% \\
\bottomrule
\end{tabular}
\end{table}

\begin{table}[htbp]
\centering
\small
\caption{220-case mixed-model ablation for PDE-Refine, referenced from \Cref{sec:discussion:mixed_model}. Rows cross the model used for PDE extraction with the model used for code generation and refinement. Mixed rows are diagnostic: a stronger extractor can improve a weaker generator, but the resulting fidelity remains bounded by the downstream generator and below the strongest self pipeline. Errors are retained model parse/extraction failures and are scored as IFS 0.}
\label{tab:cross_ablation}
\begin{tabular}{@{}lcc@{}}
\toprule
\textbf{Configuration} & \textbf{Mean IFS} & \textbf{Errors} \\
\midrule
\multicolumn{3}{@{}l}{\emph{GPT-5.4 / DeepSeek V4 Flash diagnostic}} \\
GPT-5.4 extracts $\rightarrow$ GPT-5.4 generates & 0.792 & 0 \\
DeepSeek V4 Flash extracts $\rightarrow$ DeepSeek V4 Flash generates & 0.765 & 12 \\
GPT-5.4 extracts $\rightarrow$ DeepSeek V4 Flash generates & 0.772 & 0 \\
DeepSeek V4 Flash extracts $\rightarrow$ GPT-5.4 generates & 0.783 & 8 \\
\midrule
\multicolumn{3}{@{}l}{\emph{Gemini 3.1 Flash Lite / DeepSeek V4 Flash diagnostic}} \\
Gemini 3.1 Flash Lite extracts $\rightarrow$ Gemini 3.1 Flash Lite generates & 0.796 & 0 \\
DeepSeek V4 Flash extracts $\rightarrow$ DeepSeek V4 Flash generates & 0.765 & 12 \\
Gemini 3.1 Flash Lite extracts $\rightarrow$ DeepSeek V4 Flash generates & 0.784 & 1 \\
DeepSeek V4 Flash extracts $\rightarrow$ Gemini 3.1 Flash Lite generates & 0.791 & 7 \\
\bottomrule
\end{tabular}
\end{table}

\begin{table}[htbp]
\centering
\caption{Per-family IFS by method for 220-case results, referenced from \Cref{sec:exp:errors}. Bold = best deployable method within each model block. PDE-Reg rows come from the registry execution audit and use the same frozen object-realization registry as the registry-enabled baselines.}
\label{tab:family_methods}
\small
\resizebox{\linewidth}{!}{%
\begin{tabular}{@{}lrccccccccccc@{}}
\toprule
& & \multicolumn{3}{c}{\textbf{Claude Sonnet 4.6}} & \multicolumn{3}{c}{\textbf{GPT-5.4}} & \multicolumn{2}{c}{\textbf{GPT-4.1-mini}} & \multicolumn{3}{c}{\textbf{DeepSeek V4 Flash}} \\
\cmidrule(lr){3-5}\cmidrule(lr){6-8}\cmidrule(lr){9-10}\cmidrule(l){11-13}
\textbf{Physics Family} & $n$ & Direct & PDE-Refine & PDE-Reg & Direct & PDE-Refine & PDE-Reg & Direct & PDE-Reg & Direct & PDE-Refine & PDE-Reg \\
\midrule
Trans. Heat & 66 & 0.834 & 0.826 & \textbf{0.886} & 0.826 & 0.848 & \textbf{0.866} & 0.436 & \textbf{0.657} & 0.738 & 0.834 & \textbf{0.901} \\
SS Heat & 36 & \textbf{0.911} & 0.871 & 0.891 & 0.836 & 0.863 & \textbf{0.882} & 0.650 & \textbf{0.677} & 0.786 & 0.878 & \textbf{0.880} \\
TH-Coupled & 30 & 0.774 & 0.815 & \textbf{0.889} & 0.825 & 0.824 & \textbf{0.859} & 0.349 & \textbf{0.770} & 0.719 & 0.758 & \textbf{0.847} \\
Elasticity & 28 & 0.685 & 0.931 & \textbf{0.947} & 0.870 & 0.879 & \textbf{0.911} & 0.194 & \textbf{0.890} & 0.519 & 0.915 & \textbf{0.946} \\
THM-Coupled & 20 & 0.493 & 0.641 & \textbf{0.679} & 0.367 & 0.602 & \textbf{0.745} & 0.131 & \textbf{0.624} & 0.182 & 0.519 & \textbf{0.551} \\
Plasticity & 20 & 0.487 & \textbf{0.812} & 0.807 & 0.684 & 0.731 & \textbf{0.806} & 0.223 & \textbf{0.755} & 0.395 & 0.813 & \textbf{0.840} \\
Phase-Field & 20 & 0.691 & 0.706 & \textbf{0.719} & 0.575 & \textbf{0.674} & 0.667 & 0.351 & \textbf{0.494} & 0.360 & 0.522 & \textbf{0.523} \\
\bottomrule
\end{tabular}%
}
\end{table}

\begin{table}[htbp]
\centering
\caption{Per-family IFS for registry execution-audit variants. All model blocks use the same frozen object-realization registry; only PDE-Reg receives PDE/IFS feedback.}
\label{tab:family_methods_full}
\tiny
\resizebox{\linewidth}{!}{%
\begin{tabular}{@{}lrcccccccc@{}}
\toprule
& & \multicolumn{2}{c}{\textbf{Claude Sonnet 4.6}} & \multicolumn{2}{c}{\textbf{GPT-5.4}} & \multicolumn{2}{c}{\textbf{GPT-4.1-mini}} & \multicolumn{2}{c}{\textbf{DeepSeek V4 Flash}} \\
\cmidrule(lr){3-4}\cmidrule(lr){5-6}\cmidrule(lr){7-8}\cmidrule(l){9-10}
\textbf{Physics Family} & $n$ & Exec-Repair+Reg & PDE-Reg & Exec-Repair+Reg & PDE-Reg & Exec-Repair+Reg & PDE-Reg & Exec-Repair+Reg & PDE-Reg \\
\midrule
Trans. Heat & 66 & 0.856 & 0.886 & 0.830 & 0.866 & 0.427 & 0.657 & 0.789 & 0.901 \\
SS Heat & 36 & 0.925 & 0.891 & 0.846 & 0.882 & 0.619 & 0.677 & 0.794 & 0.880 \\
TH-Coupled & 30 & 0.849 & 0.889 & 0.830 & 0.859 & 0.330 & 0.770 & 0.753 & 0.847 \\
Elasticity & 28 & 0.604 & 0.947 & 0.860 & 0.911 & 0.246 & 0.890 & 0.527 & 0.946 \\
THM-Coupled & 20 & 0.515 & 0.679 & 0.322 & 0.745 & 0.114 & 0.624 & 0.194 & 0.551 \\
Plasticity & 20 & 0.449 & 0.807 & 0.715 & 0.806 & 0.215 & 0.755 & 0.336 & 0.840 \\
Phase-Field & 20 & 0.712 & 0.719 & 0.580 & 0.667 & 0.325 & 0.494 & 0.461 & 0.523 \\
\bottomrule
\end{tabular}%
}
\end{table}

\begin{table}[htbp]
\centering
\caption{Full sub-dimensional IFS breakdown for standard non-registry and registry sweeps, referenced from \Cref{sec:exp:breakdown}.}
\label{tab:subdim}
\scriptsize
\begin{tabular}{@{}llccccc@{}}
\toprule
\textbf{Sweep} & \textbf{Method} & \textbf{Term} & \textbf{Coeff} & \textbf{BC} & \textbf{IC} & \textbf{Time} \\
\midrule
Sonnet 4.6 & Direct & 0.750 & 0.737 & 0.743 & 0.991 & 0.905 \\
 & SpecGen & 0.892 & 0.435 & 0.785 & 0.991 & 0.918 \\
 & PDE-Refine & 0.906 & 0.485 & 0.797 & 0.995 & 0.923 \\
\midrule
Sonnet 4.6 registry & Exec-Repair+Reg & 0.742 & 0.745 & 0.767 & 0.991 & 0.914 \\
 & PDE-Reg & 0.924 & 0.633 & 0.828 & 0.995 & 0.923 \\
\midrule
GPT-5.4 & Direct & 0.827 & 0.627 & 0.728 & 0.991 & 0.923 \\
 & SpecGen & 0.814 & 0.496 & 0.782 & 0.991 & 0.914 \\
 & PDE-Refine & 0.869 & 0.599 & 0.770 & 0.995 & 0.918 \\
\midrule
GPT-5.4 registry & Exec-Repair+Reg & 0.827 & 0.638 & 0.730 & 0.991 & 0.914 \\
 & PDE-Reg & 0.924 & 0.607 & 0.787 & 1.000 & 0.918 \\
\midrule
GPT-4.1-mini registry & Direct & 0.344 & 0.626 & 0.441 & 0.991 & 0.868 \\
 & Exec-Repair+Reg & 0.332 & 0.608 & 0.433 & 0.991 & 0.873 \\
 & PDE-Reg & 0.759 & 0.572 & 0.686 & 0.995 & 0.827 \\
\midrule
DeepSeek V4 Flash plain & Direct & 0.656 & 0.688 & 0.559 & 0.991 & 0.814 \\
 & SpecGen & 0.742 & 0.534 & 0.643 & 0.891 & 0.759 \\
 & PDE-Refine & 0.873 & 0.537 & 0.750 & 0.959 & 0.864 \\
\midrule
DeepSeek V4 Flash registry & Exec-Repair+Reg & 0.677 & 0.729 & 0.585 & 0.991 & 0.841 \\
 & PDE-Reg & 0.900 & 0.609 & 0.799 & 0.968 & 0.886 \\
\bottomrule
\end{tabular}
\end{table}

\begin{figure}[htbp]
\centering
\includegraphics[width=\linewidth]{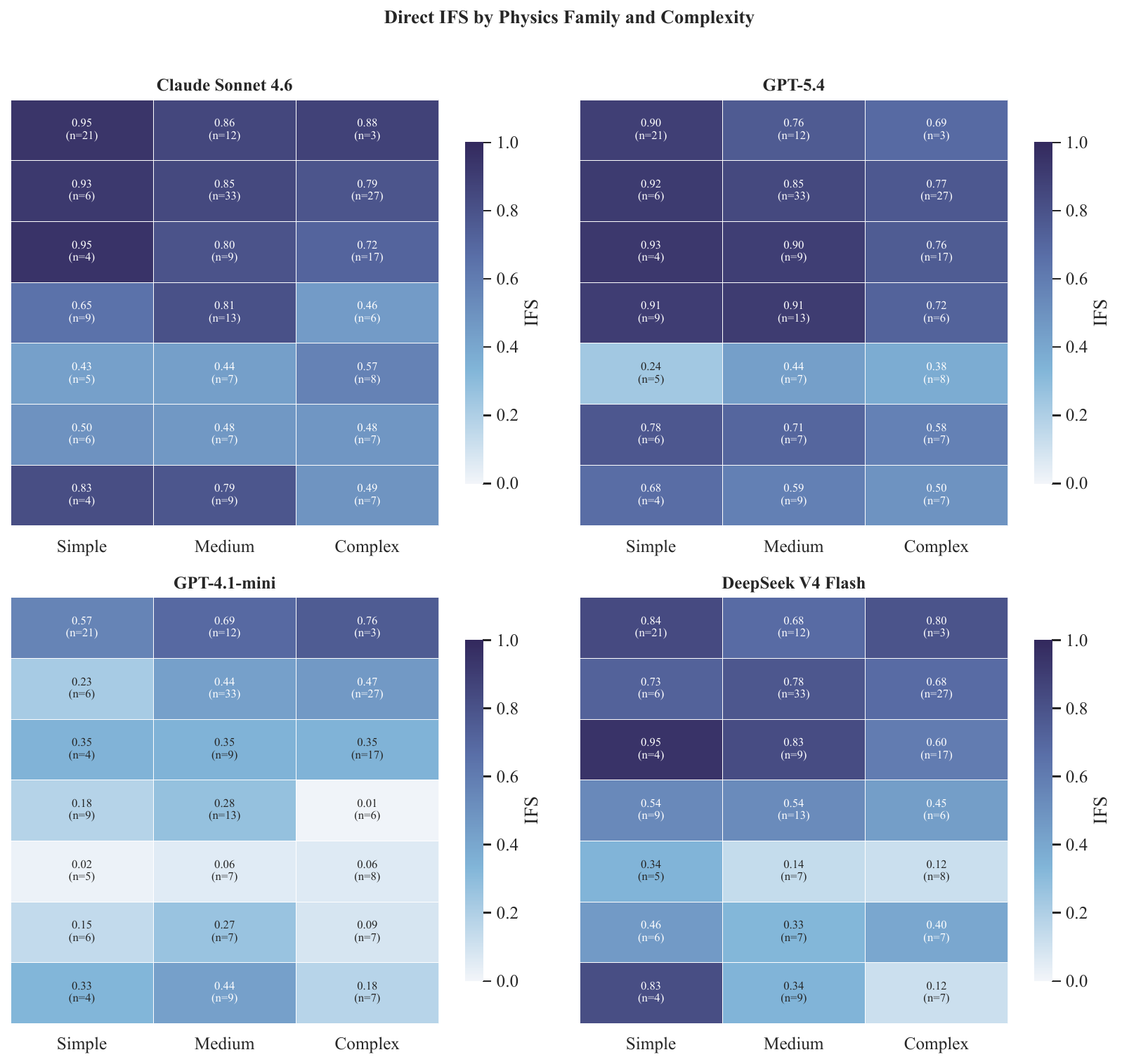}
\caption{Direct IFS by physics family and complexity tier for the four Direct sweeps. Each cell shows mean IFS and sample count.}
\label{fig:heatmap}
\end{figure}

\begin{figure}[htbp]
\centering
\includegraphics[width=\linewidth]{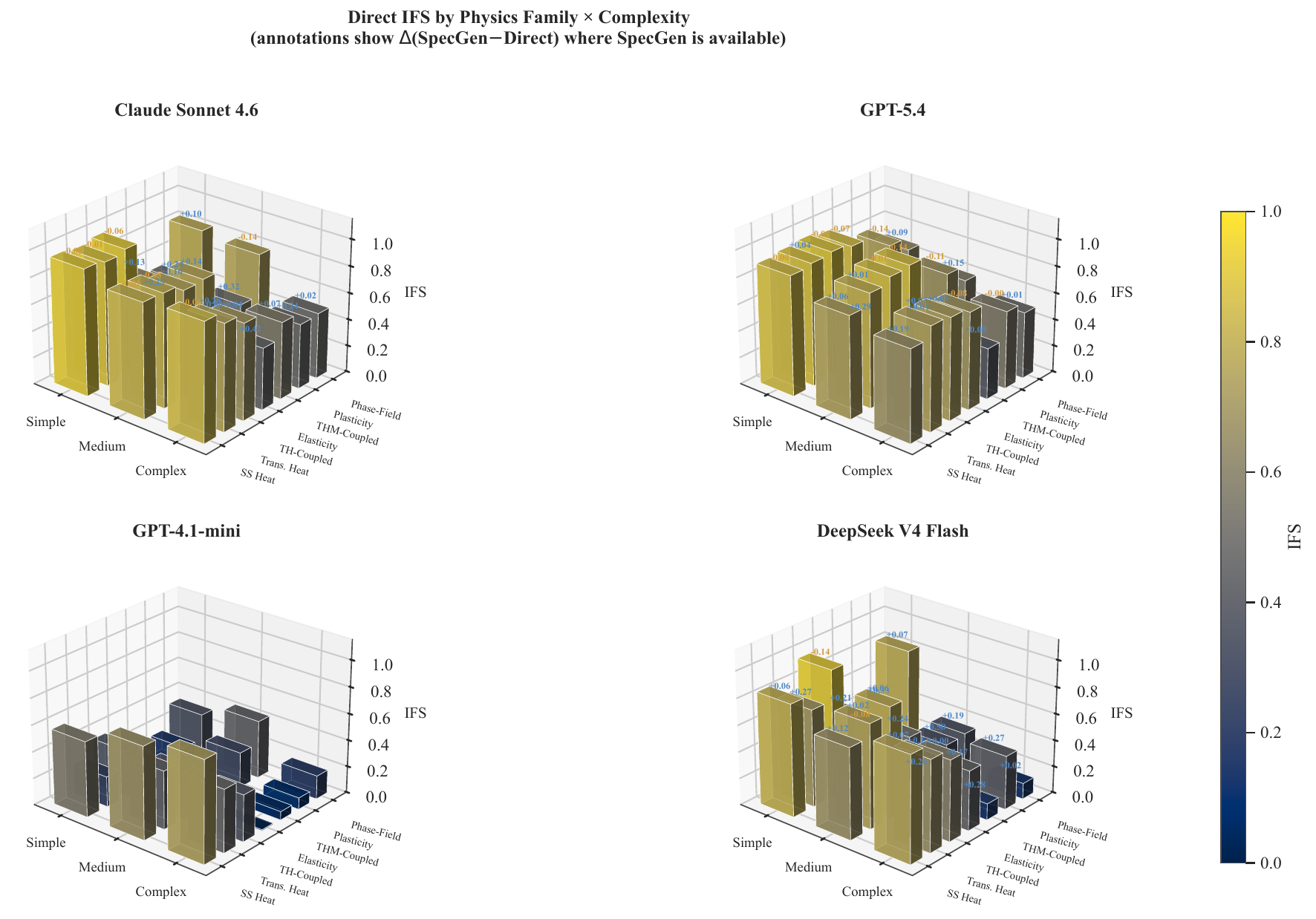}
\caption{Companion family--complexity Direct IFS view. Bars compare Direct baselines by expert family and complexity tier for each model; annotations report $\Delta$(SpecGen--Direct) where SpecGen is available.}
\label{fig:heatmap_3d}
\end{figure}

\begin{figure}[htbp]
\centering
\includegraphics[width=\linewidth]{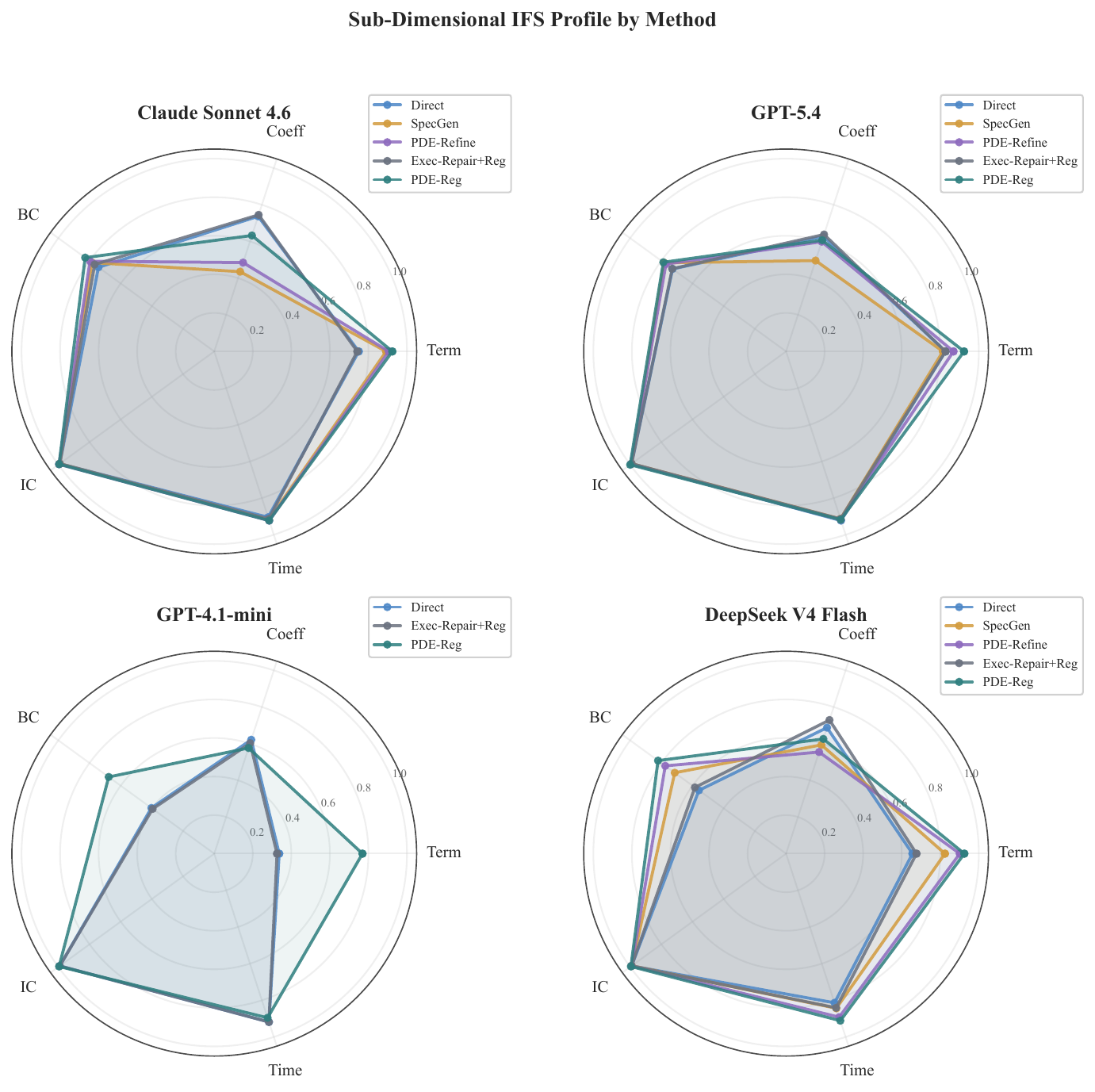}
\caption{Sub-dimensional IFS profiles for standard non-registry and registry variants. Registry-only execution repair improves object realization but does not dominate the structural term/BC dimensions; PDE-Reg gives the largest structural lift for weak models.}
\label{fig:radar}
\end{figure}

\begin{figure}[htbp]
\centering
\includegraphics[width=0.75\linewidth]{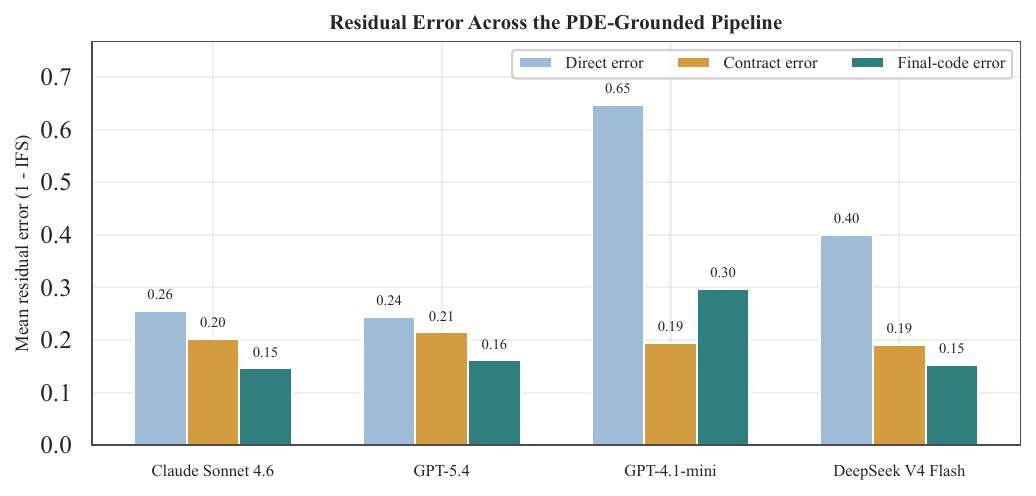}
\caption{Residual-error view for PDE-grounded methods. Bars compare Direct error, extracted-contract error, and final-code error, separating the PDE-comprehension bottleneck from the final synthesized-code fidelity.}
\label{fig:error_decomp}
\end{figure}
\FloatBarrier

\FloatBarrier


\newpage

\input{checklist.tex}
\end{document}

%% file: full_kernel_table.tex
\begingroup
\small
\setlength{\tabcolsep}{4pt}
\renewcommand{\arraystretch}{1.08}

\begin{longtable}{@{}>{\raggedright\arraybackslash}p{5cm}
                    >{\raggedright\arraybackslash}p{1.8cm}
                    >{\centering\arraybackslash}p{1.9cm}
                    >{\raggedright\arraybackslash}p{6.5cm}@{}}
\caption{Kernel--PDE weak-form mapping table for the covered MOOSE fragment; the listed source-grounded weak-form contributions cover 762 of 766 MooseBench ground-truth kernel-term instances (99.48\%). The ``Canonical Weak-Form Residual Contribution'' column records the most explicit kernel-level contribution to the assembled weak-form residual for each kernel, identified primarily from the MOOSE source code and supplemented by official documentation when clarification is needed. Depending on the kernel implementation, entries may appear in integration-by-parts form, as direct test-function-weighted strong-form terms, or, when necessary, as kernel-level energetic, matrix, stabilization, or discrete residual contributions. For covered entries, these source-grounded representations are normalized to a canonical weak-form operator descriptor $(\tau,\phi,\gamma)$, where $\tau$ denotes the operation on the trial-side quantity, $\phi$ the operation on the test function, and $\gamma$ the contraction pattern. This descriptor captures the weak-form shape associated with the corresponding normalized operator type; coefficients, coupled variables, constitutive choices, and nonlinear energetic factors are represented separately in the explicit contribution, mapping metadata, or the coefficient/material contract.}
\label{tab:full_kernel}
\\

\toprule
\textbf{MOOSE Kernel} & \textbf{Operator} & \textbf{Variable} & \textbf{Canonical Weak-Form Residual Contribution} \\
\midrule
\endfirsthead

\caption[]{Kernel--PDE mapping table for the covered MOOSE fragment (continued).}\\
\toprule
\textbf{MOOSE Kernel} & \textbf{Operator} & \textbf{Variable} & \textbf{Canonical Weak-Form Residual Contribution} \\
\midrule
\endhead

\midrule
\multicolumn{4}{r}{\scriptsize\itshape continued on next page} \\
\endfoot

\bottomrule
\endlastfoot

{\footnotesize\ttfamily ACGrGrMulti} & allen\_cahn & $\eta_i$ & $\left(\psi,\ \mu\!\left(\eta_i^3-\eta_i+2\eta_i\sum_j \gamma_{ij}\eta_j^2\right)\right)$ \\
{\footnotesize\ttfamily ACInterface} & allen\_cahn & $\eta_i$ & $(\kappa_i \nabla \eta_i,\ \nabla(L_i \psi))$ \\
{\footnotesize\ttfamily ACInterface2DMultiPhase1} & allen\_cahn & $\eta_{\alpha i}$ & $\left(\nabla (L \psi), \frac{1}{2} \frac {\partial \kappa} {\partial \nabla \eta_{\alpha i}} \sum (\nabla \eta_{\beta j})^2 \right)$ \\
{\footnotesize\ttfamily ACInterface2DMultiPhase2} & allen\_cahn & $\eta_{\alpha i}$ & $\left( \kappa \nabla \eta_{\alpha i}, \nabla (L \psi) \right)$ \\
{\footnotesize\ttfamily ACSwitching} & allen\_cahn & $\eta_i$ & $(\psi,\ \sum_j \frac{\partial h_j}{\partial \eta_i} F_j)$ \\
{\footnotesize\ttfamily ADBodyForce} & source & $u$ & $(\psi,\,-f)$ \\
{\footnotesize\ttfamily ADDiffusion} & diffusion & $u$ & $(\nabla \psi,\ \nabla u)$ \\
{\footnotesize\ttfamily ADHeatConduction} & diffusion & $T$ & $(\nabla \psi,\ k \nabla T)$ \\
{\footnotesize\ttfamily ADHeatConductionTimeDerivative} & time\_derivative & $T$ & $(\psi,\ \rho c_p \partial_t T)$ \\
{\footnotesize\ttfamily ADStressDivergenceRZTensors} & stress\_divergence & $(u_r,\,u_z)$ & $(\nabla \boldsymbol{\psi},\ \boldsymbol{\sigma}_{RZ}) + (\psi_r/r,\ \sigma_{\theta\theta})$ \\
{\footnotesize\ttfamily ADStressDivergenceTensors} & stress\_divergence & $\mathbf{u}$ & $(\nabla \boldsymbol{\psi},\ \boldsymbol{\sigma})$ \\
{\footnotesize\ttfamily ADTimeDerivative} & time\_derivative & $u$ & $(\psi,\ \partial_t u)$ \\
{\footnotesize\ttfamily ADVectorDiffusion} & diffusion & $\mathbf{u}$ & $(\nabla \boldsymbol{\psi},\ \nabla \mathbf{u})$ \\
{\footnotesize\ttfamily AllenCahn} & allen\_cahn & $\eta_i$ & $L\left(\dfrac{\partial f_{loc}}{\partial \eta_i} + \dfrac{\partial E_d}{\partial \eta_i},\ \psi_m\right)$ \\
{\footnotesize\ttfamily AnisotropicDiffusion} & diffusion & $u$ & $(\nabla \psi,\ \widetilde{k}\nabla u)$ \\
{\footnotesize\ttfamily ArrayBodyForce} & source & $\mathbf{u}$ & $(\boldsymbol{\psi},\ -\mathbf{f})$ \\
{\footnotesize\ttfamily ArrayDiffusion} & diffusion & $\mathbf{u}$ & $(\nabla \boldsymbol{\psi},\ \mathbf{D}\nabla \mathbf{u})$ \\
{\footnotesize\ttfamily ArrayReaction} & reaction & $\mathbf{u}$ & $(\boldsymbol{\psi},\ \mathbf{R}\mathbf{u})$ \\
{\footnotesize\ttfamily ArrayTimeDerivative} & time\_derivative & $\mathbf{u}$ & $(\boldsymbol{\psi},\ \mathbf{T}\,\partial_t \mathbf{u})$ \\
{\footnotesize\ttfamily BodyForce} & source & $u$ & $(\psi,\,-f)$ \\
{\footnotesize\ttfamily CHInterface} & cahn\_hilliard & $c_i$ & $(\kappa_i \nabla^2 c_i,\ \nabla \cdot (M_i \nabla \psi))$ \\
{\footnotesize\ttfamily CHMath} & other & $c$ & $(\nabla \psi,\ \nabla(c^3-c))$ \\
{\footnotesize\ttfamily CHSplitFlux} & cahn\_hilliard & $j$ & $(\psi,\ j + M\nabla \mu)$ \\
{\footnotesize\ttfamily CahnHilliard} & cahn\_hilliard & $c_i$ & $\left(M_i \left(\nabla \dfrac{\partial f_{loc}}{\partial c_i} + \nabla \dfrac{\partial E_d}{\partial c_i}\right),\ \nabla \psi\right)$ \\
{\footnotesize\ttfamily CoefDiffusion} & diffusion & $u$ & $(\nabla \psi,\ D \nabla u)$ \\
{\footnotesize\ttfamily CoefReaction} & reaction & $u$ & $(\psi,\ \lambda u)$ \\
{\footnotesize\ttfamily CoefTimeDerivative} & time\_derivative & $u$ & $(\psi,\ \alpha \partial_t u)$ \\
{\footnotesize\ttfamily ConservativeAdvection} & advection & $u$ & $-(\nabla \psi,\ \mathbf{v}u)$ \\
{\footnotesize\ttfamily CosseratStressDivergenceTensors} & stress\_divergence & $\mathbf{u}$ & $(\nabla \boldsymbol{\psi},\ \boldsymbol{\sigma}_{\mathrm{Cosserat}})$ \\
{\footnotesize\ttfamily CoupledBEEquilibriumSub} & other & $C_j$ & $(\psi,\ \partial_t(\phi \sum_i \nu_{ji} C_i))$ \\
{\footnotesize\ttfamily CoupledBEKinetic} & other & $C_m$ & $(\psi,\ \partial_t(\phi \sum_m w_m C_m))$ \\
{\footnotesize\ttfamily CoupledConvectionReactionSub} & reaction & $C_j$ & $(\psi,\ \mathbf{q}\cdot \nabla(\sum_i \nu_{ji} C_i))$ \\
{\footnotesize\ttfamily CoupledDiffusionReactionSub} & diffusion & $C_j$ & $(\nabla \psi,\ D \nabla (\sum_i \nu_{ji} C_i))$ \\
{\footnotesize\ttfamily CoupledForce} & source & $u$ & $(\psi,\ -\alpha v)$ \\
{\footnotesize\ttfamily CoupledSwitchingTimeDerivative} & time\_derivative & $\eta_i$ & $\left(\psi,\ \left(\sum_j \frac{\partial h_j}{\partial \eta_i} F_j\right)\partial_t v\right)$ \\
{\footnotesize\ttfamily CoupledTimeDerivative} & time\_derivative & $u$ & $(\psi,\ \partial_t v)$ \\
{\footnotesize\ttfamily CurlCurlField} & curl\_curl & $\mathbf{u}$ & $(\nabla \times \boldsymbol{\psi},\ a\,\nabla \times \mathbf{u})$ \\
{\footnotesize\ttfamily DarcyFluxPressure} & other & $P$ & $(\nabla \psi,\ \dfrac{K}{\mu}(\nabla P-\rho \mathbf{g}))$ \\
{\footnotesize\ttfamily DesorptionFromMatrix} & source & $m$ & $(\psi,\ \dot{m})$ \\
{\footnotesize\ttfamily DesorptionToPorespace} & source & $m$ & $(\psi,\ -\dot{m})$ \\
{\footnotesize\ttfamily Diffusion} & diffusion & $u$ & $(\nabla \psi,\ \nabla u)$ \\
{\footnotesize\ttfamily DynamicStressDivergenceTensors} & stress\_divergence & $\mathbf{u}$ & $(\nabla \boldsymbol{\psi},\ \boldsymbol{\sigma}_{\mathrm{dyn}})$ \\
{\footnotesize\ttfamily FunctionDiffusion} & diffusion & $u$ & $(\nabla \psi,\ f(\mathbf{x},t)\nabla v)$ \\
{\footnotesize\ttfamily FunctorKernel} & other & $u$ & $(\psi,\ \pm (p-u))$ \\
{\footnotesize\ttfamily GradientComponent} & other & $u$ & $(\psi,\ u - \partial_\alpha v)$ \\
{\footnotesize\ttfamily Gravity} & source & $\mathbf{u}$ & $(\boldsymbol{\psi},\ \mathbf{g})$ \\
{\footnotesize\ttfamily HeatConduction} & diffusion & $T$ & $(\nabla \psi,\ k \nabla T)$ \\
{\footnotesize\ttfamily HeatConductionTimeDerivative} & time\_derivative & $T$ & $(\psi,\ \rho C_p \partial_t T)$ \\
{\footnotesize\ttfamily HeatSource} & source & $T$ & $(\psi,\ -\dot{q})$ \\
{\footnotesize\ttfamily INSADMass} & ns\_continuity & $p$ & $(\psi,\ \nabla \cdot \mathbf{u})$ \\
{\footnotesize\ttfamily INSADMomentumAdvection} & advection & $\mathbf{u}$ & $(\boldsymbol{\psi},\ \rho (\mathbf{u}\cdot\nabla)\mathbf{u})$ \\
{\footnotesize\ttfamily INSADMomentumPressure} & ns\_pressure & $\mathbf{u}$ & $-(\nabla \cdot \boldsymbol{\psi},\ p)$ \\
{\footnotesize\ttfamily INSADMomentumSUPG} & ns\_viscous & $\mathbf{u}$ & $(\tau\,\mathbf{u}\cdot\nabla \boldsymbol{\psi},\ \mathbf{R}_{\mathrm{mom}})$ \\
{\footnotesize\ttfamily INSADMomentumViscous} & ns\_viscous & $\mathbf{u}$ & $(\nabla \boldsymbol{\psi},\ \boldsymbol{\tau})$ \\
{\footnotesize\ttfamily INSFEFluidMassKernel} & ns\_continuity & $p$ & $-\rho \,\bar{\mathbf{v}} \cdot \nabla \psi$ \\
{\footnotesize\ttfamily INSFEFluidMomentumKernel} & ns\_viscous & $\mathbf{v}$ & $\rho\,\epsilon\,(\mathbf{v}\cdot\nabla\mathbf{v})\,\psi+\epsilon\,(\nabla p)\,\psi-\epsilon\rho\mathbf{g}\,\psi+\boldsymbol{\tau}\cdot\nabla\psi+\mathbf{f}_{\mathrm{fric}}\,\psi$ \\
{\footnotesize\ttfamily INSMass} & ns\_continuity & $p$ & $(\psi,\ -\nabla \cdot \mathbf{u})$ \\
{\footnotesize\ttfamily INSMomentumLaplaceForm} & ns\_viscous & $\mathbf{u}$ & $(\nabla \boldsymbol{\psi},\ \mu \nabla \mathbf{u})$ \\
{\footnotesize\ttfamily INSMomentumTimeDerivative} & time\_derivative & $\mathbf{u}$ & $(\boldsymbol{\psi},\ \rho\,\partial_t \mathbf{u})$ \\
{\footnotesize\ttfamily InertialForce} & inertia & $\mathbf{u}$ & $(\boldsymbol{\psi},\ \rho\,\ddot{\mathbf{u}}+\eta\,\rho\,\dot{\mathbf{u}})$ \\
{\footnotesize\ttfamily InertialTorque} & inertia & $\mathbf{u}$ & $(\psi,\ \rho\,\epsilon_{ijk}u_j\ddot{u}_k)$ \\
{\footnotesize\ttfamily KKSMultiACBulkC} & allen\_cahn & $\eta_1$ & $\left(\psi,\ -\dfrac{\partial F_1}{\partial c_1}\sum_j \dfrac{\partial h_j}{\partial \eta_1} c_j\right)$ \\
{\footnotesize\ttfamily KKSMultiACBulkF} & allen\_cahn & $\eta_i$ & $\left(\psi,\ \sum_j \dfrac{\partial h_j}{\partial \eta_i}F_j + w_i\dfrac{\partial g_i}{\partial \eta_i}\right)$ \\
{\footnotesize\ttfamily KKSPhaseChemicalPotential} & other & $c_a$ & $\left(\psi,\ \dfrac{\partial F_a}{\partial c_a}\dfrac{1}{k_a}-\dfrac{\partial F_b}{\partial c_b}\dfrac{1}{k_b}\right)$ \\
{\footnotesize\ttfamily MassEigenKernel} & other & $u$ & $\lambda(\psi,\ -c\,u)$ \\
{\footnotesize\ttfamily MassLumpedTimeDerivative} & time\_derivative & $u$ & $(\psi,\ \partial_t u)$ \\
{\footnotesize\ttfamily MatDiffusion} & diffusion & $u$ & $(\nabla \psi,\ D\nabla u)$ \\
{\footnotesize\ttfamily MatReaction} & reaction & $u$ & $(\psi,\ -L\,v)$ \\
{\footnotesize\ttfamily NestedKKSMultiACBulkC} & allen\_cahn & $\eta_i$ & $\left(\psi,\ -\sum_m \frac{\partial F_1}{\partial c_{m,1}} \sum_j \frac{\partial h_j}{\partial \eta_i} c_{m,j}\right)$ \\
{\footnotesize\ttfamily NestedKKSMultiACBulkF} & allen\_cahn & $\eta_i$ & $\left(\psi,\ \sum_j \dfrac{\partial h_j}{\partial \eta_i}F_j + w_i\dfrac{\partial g_i}{\partial \eta_i}\right)$ \\
{\footnotesize\ttfamily NullKernel} & other & $u$ & $(\psi,\ 0)$ \\
{\footnotesize\ttfamily PINSFEFluidPressureTimeDerivative} & time\_derivative & $p$ & $\left(\psi,\ \epsilon \left(\dfrac{\partial \rho}{\partial T}\dot{T}+\dfrac{\partial \rho}{\partial p}\dot{p}\right)\right)$ \\
{\footnotesize\ttfamily PINSFEFluidVelocityTimeDerivative} & time\_derivative & $v_i$ & $\left(\psi,\ \rho \dfrac{\partial v_i}{\partial t} + v_i \dfrac{\partial \rho}{\partial t}\right)$ \\
{\footnotesize\ttfamily PorousFlowAdvectiveFlux} & advection & $\chi_\beta^\kappa$ & $\left(\nabla\psi,\ \sum_{\beta}\chi_\beta^\kappa \rho_\beta \dfrac{k\,k_{r,\beta}}{\mu_\beta}\left(\nabla P_\beta-\rho_\beta \mathbf{g}\right)\right)$ \\
{\footnotesize\ttfamily PorousFlowBasicAdvection} & advection & $u$ & $-(\nabla\psi,\ \mathbf{v}_\beta u)$ \\
{\footnotesize\ttfamily PorousFlowDispersiveFlux} & diffusion & $\chi_\beta^\kappa$ & $\left(\nabla\psi,\ \sum_{\beta}\rho_\beta \mathbf{D}_\beta^\kappa \nabla X_\beta^\kappa\right)$ \\
{\footnotesize\ttfamily PorousFlowEffectiveStressCoupling} & pf\_effective\_stress & $\mathbf{u}$ & $-\alpha_B\,p_{\mathrm{eff}}\,\partial_{x_c}\psi$ \\
{\footnotesize\ttfamily PorousFlowEnergyTimeDerivative} & time\_derivative & $T$ & $\left(\psi,\ \partial_t\left((1-\phi)e_r+\phi\sum_{\beta}S_\beta \rho_\beta e_\beta\right)\right)$ \\
{\footnotesize\ttfamily PorousFlowFluxLimitedTVDAdvection} & advection & $u$ & $\left(\nabla\psi,\ \sum_{\beta}\chi_\beta^\kappa \rho_\beta \dfrac{k\,k_{r,\beta}}{\mu_\beta}\left(\nabla P_\beta-\rho_\beta \mathbf{g}\right)\right)$ \\
{\footnotesize\ttfamily PorousFlowFullySaturatedAdvectiveFlux} & advection & $\chi^\kappa$ & $\left(\nabla\psi,\ (\rho)\chi^\kappa \dfrac{k}{\mu}\left(\nabla P-\rho \mathbf{g}\right)\right)$ \\
{\footnotesize\ttfamily PorousFlowFullySaturatedDarcyBase} & pf\_darcy\_flux & $P$ & $\left(\nabla\psi,\ (\rho)\dfrac{k}{\mu}\left(\nabla P-\rho \mathbf{g}\right)\right)$ \\
{\footnotesize\ttfamily PorousFlowFullySaturatedDarcyFlow} & advection & $\chi^\kappa$ & $\left(\nabla\psi,\ (\rho)\chi^\kappa \dfrac{k}{\mu}\left(\nabla P-\rho \mathbf{g}\right)\right)$ \\
{\footnotesize\ttfamily PorousFlowHeatAdvection} & advection & $h$ & $\left(\nabla\psi,\ \sum_{\beta} h_\beta \rho_\beta \dfrac{k\,k_{r,\beta}}{\mu_\beta}\left(\nabla P_\beta-\rho_\beta \mathbf{g}\right)\right)$ \\
{\footnotesize\ttfamily PorousFlowHeatConduction} & diffusion & $T$ & $(\nabla\psi,\ \lambda \nabla T)$ \\
{\footnotesize\ttfamily PorousFlowHeatMassTransfer} & other & $u$ & $(\psi,\ k_{\mathrm{tr}}(u-v))$ \\
{\footnotesize\ttfamily PorousFlowMassTimeDerivative} & time\_derivative & $\chi^\kappa$ & $\left(\psi,\ \partial_t\!\left(\phi \sum_{\beta} (\rho_\beta) S_\beta \chi_\beta^\kappa\right)\right)$ \\
{\footnotesize\ttfamily PorousFlowMassVolumetricExpansion} & source & $\chi^\kappa$ & $\left(\psi,\ \phi \sum_{\beta} (\rho_\beta) S_\beta \chi_\beta^\kappa \,\dot{\epsilon}_v\right)$ \\
{\footnotesize\ttfamily PorousFlowPreDis} & reaction & $c_i$ & $\left(\psi,\ \phi\,S_{\mathrm{aq}}\sum_r \nu_{ir}\rho_r R_r\right)$ \\
{\footnotesize\ttfamily PrimaryConvection} & advection & $u$ & $(\psi,\ \mathbf{v}_D\cdot\nabla u)$ \\
{\footnotesize\ttfamily PrimaryDiffusion} & diffusion & $c_j$ & $-(\nabla\psi,\ \phi D \nabla c_j)$ \\
{\footnotesize\ttfamily PrimaryTimeDerivative} & time\_derivative & $c_j$ & $\left(\psi,\ \phi \dfrac{\partial c_j}{\partial t}\right)$ \\
{\footnotesize\ttfamily Reaction} & reaction & $u$ & $(\psi,\ \lambda u)$ \\
{\footnotesize\ttfamily SplitCHParsed} & cahn\_hilliard & $c$ & $\left(-\kappa_i \nabla^2 c_i + \dfrac{\partial f_{loc}}{\partial c_i} + \dfrac{\partial E_d}{\partial c_i} - \mu_i\right)$ \\
{\footnotesize\ttfamily SplitCHWRes} & cahn\_hilliard & $w$ & $(M\nabla u,\nabla\psi)$ \\
{\footnotesize\ttfamily StressDivergenceTensors} & stress\_divergence & $\mathbf{u}$ & $(\nabla\boldsymbol{\psi},\ \boldsymbol{\sigma})$ \\
{\footnotesize\ttfamily StressDivergenceTensorsTruss} & stress\_divergence & $\mathbf{u}$ & $(\nabla\boldsymbol{\psi},\ \boldsymbol{\sigma})$ \\
{\footnotesize\ttfamily SusceptibilityTimeDerivative} & time\_derivative & $u$ & $\left(\psi,\ F(u,a,b,\ldots)\,\partial_t u\right)$ \\
{\footnotesize\ttfamily TimeDerivative} & time\_derivative & $u$ & $(\psi,\ \partial_t u)$ \\
{\footnotesize\ttfamily VectorBodyForce} & source & $\mathbf{u}$ & $(\boldsymbol{\psi},\ -\mathbf{f})$ \\
{\footnotesize\ttfamily VectorDiffusion} & diffusion & $\mathbf{u}$ & $(\nabla\boldsymbol{\psi},\ \nabla \mathbf{u})$ \\
{\footnotesize\ttfamily VectorFunctionReaction} & reaction & $\mathbf{u}$ & $(\boldsymbol{\psi},\ \lambda(\mathbf{x},t)\mathbf{u})$ \\
{\footnotesize\ttfamily VectorTimeDerivative} & time\_derivative & $\mathbf{u}$ & $(\boldsymbol{\psi},\ \partial_t \mathbf{u})$ \\

\end{longtable}
\endgroup